\documentclass[smallextended]{svjour3}
\usepackage[utf8]{inputenc}
\usepackage[T1]{fontenc}
\smartqed
\usepackage{latexsym}

\usepackage{cite}
\usepackage{amsmath}
\usepackage{amsfonts}
\usepackage{enumitem}
\usepackage[frozencache,cachedir=.]{minted}
\usepackage{makecell}
\usepackage{bm}
\usepackage{graphicx}
\usepackage[toc,page]{appendix}
\usepackage{subcaption}
\usepackage{url}

\usemintedstyle{manni}
\newminted[pycode]{python}{tabsize=4, fontsize=\scriptsize, frame=lines, framesep=\fboxsep, rulecolor=\color{gray!40}}


\DeclareMathOperator*{\argmin}{argmin}
\DeclareMathOperator{\prox}{prox}
\DeclareMathOperator{\dom}{dom}
\DeclareMathOperator{\sgn}{sgn}
\newcommand{\vv}[1]{\mathbf{#1}}
\newcommand{\vx}{\vv{x}}
\newcommand{\vy}{\vv{y}}
\newcommand{\vz}{\vv{z}}
\newcommand{\vu}{\vv{u}}
\newcommand{\va}{\vv{a}}
\newcommand{\vb}{\vv{b}}
\newcommand{\vs}{\vv{s}}
\newcommand{\vA}{\vv{A}}
\newcommand{\defeq}{\mathrel{\mathop:}=}
\newcommand{\half}{\frac{1}{2}}

\newcommand{\extreals}{(-\infty, \infty]}
\newcommand{\reals}{\mathbb{R}}
\mathchardef\mhyphen="2D
\DeclareMathAlphabet{\mymathbb}{U}{BOONDOX-ds}{m}{n}

\begin{document}

\title{Efficient algorithms for implementing incremental proximal-point methods}
\author{Alex Shtoff}
\institute{
  A. Shtoff \at 
  Yahoo Research \\
  Haifa, Israel \\
  \email{alex.shtoff@yahooinc.com}
}
\date{Received: date / Accepted: date}

\maketitle

\begin{abstract}%
Model training algorithms which observe a small portion of the training set in each computational step are ubiquitous in practical machine learning, and include both stochastic and online optimization methods. In the vast majority of cases, such algorithms typically observe the training samples via the gradients of the cost functions the samples incur. Thus, these methods exploit are the \emph{slope} of the cost functions via their first-order approximations. 

To address limitations of gradient-based methods, such as sensitivity to step-size choice in the stochastic setting, or inability to exploit small function variability in the online setting, several streams of research attempt to exploit more information about the cost functions than just their gradients via the well-known proximal operators. However, implementing such methods in practice poses a challenge, since each iteration step boils down to computing the proximal operator, which may not be as easy as computing a gradient. In this work we devise a novel algorithmic framework, which exploits convex duality theory to achieve both \emph{algorithmic efficiency} and \emph{software modularity} of proximal operator implementations, in order to make experimentation with incremental proximal optimization algorithms accessible to a larger audience of researchers and practitioners, by reducing the gap between their theoretical description in research papers and their use in practice. We provide a reference Python implementation for the framework developed in this paper as an open source library at on GitHub\footnote{\url{https://github.com/alexshtf/inc_prox_pt/releases/tag/prox_pt_paper}} \cite{alex_shtof_2024_11500107}, along with examples which demonstrate our implementation on a variety of problems, and reproduce the numerical experiments in this paper. The pure Python reference implementation is not necessarily the most efficient, but is a basis for creating efficient implementations by combining Python with a native backend.
\end{abstract}

\keywords{
    convex optimization \and duality \and proximal operator \and proximal point
}

\section{Introduction}
Incremental optimization is the `bread and butter' of the theory and practice of modern machine learning, where we aim to minimize a \emph{cost} function of the model's parameters, and can be roughly classified into two major regimes. In the stochastic optimization regime it is assumed that the cost functions are sampled from a stationary distribution, our objective is to design algorithms which minimize the expected cost, the theoretical analysis typically produces bounds on the expected cost, and typical algorithms are variants of the stochastic gradient method \cite{sgd}. In the online regime cost functions from a pre-defined family are chosen by an adversary, our objective is minimizing the cumulative cost incurred by the sequence of the cost functions we observe, the theoretical analysis tool is the performance relatively to a theoretical optimum called \emph{regret}, and typical algorithms are variants of the online gradient method \cite{ogd}, or follow the leader \cite{ftl} based methods operating on linearly approximated costs.

Both in the stochastic and online regime, these methods follow the following incremental protocol: (a) observe the cost function incurred by a small subset of training samples; (b) update the model's parameters. Since the focus of this paper is on the \emph{implementation} of the computational steps of each iteration, the exact regime plays no role, and thus we treat both regimes under the same umbrella, and refer to these methods as \emph{incremental} methods, e.g. incremental gradient descent.

To be concrete, the incremental gradient method having observed the cost function $f$, which is usually interpreted as "make a small step in the descent direction", reads:
\[
\vx_{t+1} = \vx_t - \eta \nabla f(\vx_t),
\]
but can equivalently written using the celebrated \emph{proximal view}:
\[
\vx_{t+1} = \argmin_{\vx} \Bigl \{ f(\vx_t) + \langle \nabla f(\vx_t), \vx - \vx_t \rangle + \frac{1}{2\eta} \|\vx - \vx_t\|_2^2 \Bigr \},
\]
meaning "minimize a linear approximation of $f$, but stay close to $\vx_t$".  A different approach is using the cost directly instead of its linear approximation via the well-known \emph{proximal operator} \cite{moreau_prox_1, moreau_prox_2}:
\begin{equation*}\label{eq:ipp}
\vx_{t+1} = \prox_{\eta f}(\vx_t) \defeq \argmin_{\vx} \Bigl \{ f(\vx) + \frac{1}{2\eta} \|\vx - \vx_t\|_2^2 \Bigr \}, \tag{PROX}
\end{equation*}
The idea is not new, and dates back to the first proximal iteration algorithm of \cite{proxpt_martinet}, which was designed as a theoretical method for the non-incremental regime. In the stochastic optimization regime this idea is known as stochastic proximal point, and in the online regime as implicit online learning. Of course, we do not have to use the cost itself, and $f$ in \eqref{eq:ipp} may be any approximation which is not necessarily linear. In line with our aim to use the same terminology for stochastic and online algorithms, we call the idea above \emph{incremental proximal iteration}. 

In the online regime, incremental proximal iteration was first proposed by \cite{Kivinen1997implicit}, and later analyzed by \cite{kulis2010implicit} where it was termed 'implicit online learning'. Follow-up, such as \cite{karampatziakis2011online, implicit_temporal_variability}, showed improved regret guarantees and robustness in various settings.

In the stochastic regime, incremental proximal iteration was first proposed and analyzed for the finite-sum optimization problem by \cite{bertsekas2011incremental}. However, significant advantages over first order methods were discovered later in the form of improved stability with respect to step-size choices \cite{ryu2014stochastic, aprox, aprox_minibatch} . Consequently, hyper-parameter tuning becomes significantly cheaper when using certain proximal point based methods to train models, and consequently overall the computational cost and energetic footprint of the training process is reduced, even if training for one specific hyper-parameter configuration is more expensive.

Using a finer model instead of a linear approximation bears a cost - since $f$ can be arbitrarily complex, computing the proximal operator may be arbitrarily hard, or even infeasible to do in practice. Thus, there is a trade-off between any advantage a finer approximation may provide, and the difficulty of proximal operator computation.

In contrast to the above-mentioned works, rather than theoretical analysis of novel high-level techniques, our aim is to substantially reduce the difficulty of computing the proximal operator devising efficient algorithms to do so in a variety of setups, and providing a Python reference implementing those algorithms. Our pure Python implementation is not necessarily the most efficient, since its main focus is demonstration and readability, and in terms of speed we aim to be modest: up to a moderate constant factor (10-20 times) slower than a competing gradient method. For some families of functions $f$, computing $\prox_{\eta f}$ may even be as cheap as a regular gradient step, but others require solving a one-dimensional or a low-dimensional optimization problem, and a naive Python implementation is far from optimal. Our Python implementation is designed to demonstrate that efficient algorithms for computing the proximal step need not be hard to implement, and encourage our readers to adapt or revise our implementation to their needs. It is our aim, and hope, that the above contributions make it easier to apply these methods in practice for researchers, while also motivate more research by making numerical experimentation with incremental proximal point methods easily accessible to the research community. 

We'd like to emphasize that the aim of this paper is devising efficient algorithms for \emph{implementing} proximal operators of useful functions in machine learning, rather than \emph{using} proximal operators to derive better converging algorithms, in order to make such implementations accessible to the research community. In addition to efficiency, we use extensive mathematical theory to derive \emph{modular} implementations, so that researchers may compose the functions they desire from atomic building blocks. For example, an L1 regularized logistic regression problem's cost function is composed of a linear function, composed onto the convex logistic function $t \to \ln(1+\exp(t))$, and with the L1 regularizer. Thus, the code for training such models may look like this:
\begin{pycode}
x = torch.zeros(x_star.shape)
# note the modular composition below
optimizer = IncRegularizedConvexOnLinear(x, Logistic(), L1Reg(0.01))  
epoch_loss = 0.
for t, (a, b) in enumerate(my_data_set, start=1):
    step_size = 1. / math.sqrt(t)
    epoch_loss += optimizer.step(step_size, a, b)
\end{pycode}

Clearly, a proximal step is more computationally demanding than a gradient step, thus our implementation is slower. We show, empirically, that for a variety of problems, our implementation is a small constant factor slower than (proximal) gradient steps, which is useful for both researchers and practitioners.  From a practical perspective, since the works we cited above demonstrate, both  theoretically and empirically, that proximal point methods methods are very robust to the step-size choice, hyper-parameter tuning becomes significantly cheaper, and thus such a library of implementations may be useful in practice for \emph{reducing} the computational resources required for training a model. For researchers, an implementation whose run-time is a small constant factor that of gradient descent is useful for conducting numerical experiments for a reasonable computational cost for their new algorithms based on proximal operators. The results in this paper were obtained on a 2019 MacBook Pro with a 2.4 GHz 8-Core Intel Core i9 processor, and 32GB of RAM. The code for reproducing the results is in our code repository.

Since this paper is also aimed at an audience partially unfamiliar with convex analysis theory, but coming from a more machine-learning oriented background. Hence, we will briefly introduce the concepts we need throughout the paper, and refer to additional literature for an in-depth treatment. Consequently, some derivations and proofs in this paper may appear trivial to readers well versed in convex analysis and optimization.

\subsection{Notation}
Scalars are denoted by lowercase latin or greek letters, e.g., $a, \alpha$. Vectors are denoted by lowercase boldface letters, e.g. $\va$, and matrices by uppercase boldface letters, e.g. $\vA$. Vector or matrix components are denoted like scalars, e.g. $v_i, A_{ij}$.

\subsection{Extended real-valued functions}
Optimization problems are occasionally described using extended real-valued functions, which are functions that can take any real value, in addition to the infinite values $-\infty$ and $\infty$. An extended real-value function $\phi: \reals^d \to [-\infty, \infty]$ has an associated effective domain $\dom(\phi) = \{ \vx \in \reals^d: \phi(\vx) < \infty\}$, and it's natural to use such functions to encode constrained optimization problems, where $\dom(\phi)$ or $\dom(-\phi)$ are used to encode constraints of minimization or maximization problems. Throughout this paper, we implicitly assume that any extended real-valued function $\phi$ is:
\begin{itemize}
    \item \emph{proper} - its $-\infty$ nowhere, and $\dom(\phi) \neq \emptyset$, and
    \item \emph{closed} - it's epigraph $\operatorname{epi}(f) = \left\{ (\vx, t) \in \reals^d\times \reals : \phi(\vx) \leq t \right\}$ is a closed set.
\end{itemize}
When $\phi$ is used in the context of a maximization problem, we make those assumptions about $-\phi$.
Any reasonable function of practical interest is proper, and the vast majority of functions useful in practice are closed as well. An extensive introduction to extended real-valued function can be found, for example, in  \cite[chap.~2]{beck_fom}. 

\subsection{Our contributions}
In this work, we consider the following families of functions $f$ when computing $\prox_{\eta f}$. For each family of functions we show examples of machine learning models for whose training it might be useful, devise an algorithm for computing its proximal operator, provide a Python reference implementation, and experimentally measure its efficiency. The families are described below.
\begin{description}
\item[A convex onto linear composition:] 
\begin{equation*}\label{eq:cvx_lin}\tag{CL}
    f(\vx) = h(\va^T \vx + b),
\end{equation*}
where $h: \reals \to \extreals$ is a convex extended real-valued function.
\item[A regularized convex onto linear composition:]
\begin{equation*}\label{eq:cvx_lin_reg}\tag{RCL}
    f(\vx) = h(\va^T \vx + b) + r(\vx),
\end{equation*}
where $h: \reals \to \extreals$, and $r: \reals^d \to \extreals$ are convex extended real-valued functions.
\item[A mini-batch of convex onto linear compositions:]
\begin{equation*}\label{eq:cvx_lin_minibatch}\tag{CL-B}
    f(\vx) = \frac{1}{m} \sum_{i=1}^m h(\va_i^T \vx + b_i),
\end{equation*}
where $h: \reals \to \extreals$ is a convex extended real-valued function, and $m$ is small.
\end{description}
For the above families we develop a framework for computing the proximal operator, based on convex duality, to achieve both algorithmic efficiency and software modularity. We provide a pure Python reference implementation that aims to be moderately efficient. For some problems it may be on par with SGD, whereas for others it may be up to a few dozen times slower. Indeed, a pure Python reference implementation is is not necessarily the most efficient one, and a better alternative in terms of efficiency is a combination of C and Python, where C is used for loop-intensive tasks, such as one-dimensional root finding. We chose PyTorch as our array module, due to easier integration with auto-grad based model training code. The code we build in this paper is available on GitHub at \texttt{https://github.com/alexshtf/inc\_prox\_pt/}\cite{alex_shtof_2024_11500107}.

The remainder of the paper is organized as follows. After discussing previous work below, we develop the initial version for our algorithmic framework for convex onto linear compositions in Section \ref{sec:cvx_lin}. In that section, we also provide full code inline to let the readers appreciate how software modularity is facilitated by our framework. Then, we proceed to extending our framework to regularized convex onto linear compositions in Section \ref{sec:cvx_lin_reg}, and to a mini-batch of convex onto linear compositions in Section \ref{sec:cvx_lin_mb}. These sections contain only the framework code, whereas the remaining code stemming from these sections is available both on our GitHub repository, and in Appendices \ref{app:cvx_lin_reg_code} and \ref{app:cvx_lin_mb_code} in this paper.

\subsection{Previous work}
There has been a significant body of research into the analysis of incremental proximal iteration algorithms in various settings and approximating models, which in addition to the examples we provided above also include \cite{duchi_ruan_feng, asi_duchi_better_models, dinh_stochastic_gauss_newton, davis_weakly_convex, davis_higher_order_growth,asi_duchi_structures_geometry,pointSaga,prox2saga,bsPointSaga}. However, to the best of our knowledge, finding generic algorithms for efficient implementation of incremental proximal algorithms received little attention, and the issue has been partially addressed in the papers focused on the analysis. 

Exceptions to the above rule are methods based on the proximal gradient approach, original proposed by \cite{proxgrad_orig}, where the approximating function is of the form:
\[
f(\vx) = \va^T \vx + r(\vx),
\]
where $r(x)$ is some 'simple' function, usually a regularizer, for which a closed form solution for the proximal operator of $r$ is known.  See, for example, the works of \cite{boyd_parikh, beck_fom} and references therein for examples. In this work we consider a significantly broader family of functions, aimed at machine learning applications, by building on existing theory in convex analysis, and functions whose proximal operator is known.

Proximal operators of functions belonging to the \eqref{eq:cvx_lin} family is covered to some extent in the literature. For example, \cite{kulis2010implicit} shows an explicit formula for the case when the 'outer' function $h$ is the $\ell_2$ cost $h(t)=\frac{1}{2}t^2$. The case of $h(t)=\max(t, 0)$ has an explicit formula in the work of \cite{aprox}, while $h(t)=|t|$ is treated in \cite{davis_weakly_convex}. Additional formulas can be found in \cite{prox2saga}. In \cite{pointSaga} the authors provide an efficient Cython implementation for the Logistic and Hinge loss, but not in a generic framework. While \cite{ryu2014stochastic} show an explicit method for the entire family, they do not show how the method is derived, and provide neither concrete examples, nor code. We treat this family using a uniform framework based on duality, show its benefits from a software engineering perspective of decoupling concerns, and provide explicit code for examples which are useful in machine learning.

The work of \cite{ryu2014stochastic} briefly discusses an algorithm for a simplified version \eqref{eq:cvx_lin_reg} case where the regularizer is assumed to be separable. Their idea is similar to ours, but we devise an algorithm for a more general case, and provide explicit examples and code which is useful for machine learning practitioners. Moreover, they provide a 'switch to SGD heuristic' for improving computational efficiency by switching to a regular stochastic gradient method. This heuristic is orthogonal to the contributions of our paper.

Finally, the \eqref{eq:cvx_lin_minibatch} family is tackled in \cite{aprox_minibatch} for the special case of $h(t)=\max(0, t)$. We devise a uniform framework for a broader family which covers a wide variety of functions $h$, and give several examples which may be useful to practitioners.

\section{A convex onto linear composition}\label{sec:cvx_lin}
In this section we first present some applications of the convex onto linear family described in \eqref{eq:cvx_lin}, and then describe a generic framework for designing and implementing algorithms for computing the proximal point of this function family.

\subsection{Applications}\label{sec:cvx_lin_app}
Compositions of convex onto linear functions appear in a wide variety of classical machine learning problems, but also appear to be useful for other applications as well. This family is useful when training on \emph{one} training sample in each iteration.

\paragraph{Linear regression}
In the simplest case, least-squares regression, we aim to minimize a sum, or an expectation, of costs of the form
\[
f(\vx) = \half ( \va^T \vx + b )^2,
\]
where $(\va, b) \in \reals^n \times \reals$ are the data of a training sample, and $\vx$ is the model parameters vector. In this case we have $h(t)=\half t^2$. Different regression variants are obtained by choosing a different function $h$, for example, using $h(t)=|t|$ we obtain robust regression:
\[
f(\vx) = | \va^T \vx + b |,
\]
and using the $h(t)=\max((p - 1) t, p t)$ for some $p \in (0, 1)$, we obtain linear \emph{quantile regression}.

\paragraph{Logistic regression}\label{par:logreg} For the binary classification problem, we are given an input features $\va \in \reals^n$ and a label $y \in \{-1, 1\}$. The probability of $y = 1$ is modeled using
\[
\sigma(\vx) = \frac{1}{1 + \exp(-\va^T \vx)},
\]
where $\vx$ is the model's parameter vector. The cost incurred by each training sample is computed using the \emph{binary cross-entropy loss}:
\[
f(\vx) = \begin{cases}
-\ln(\sigma(\vx)) & y = -1 \\
-\ln(1 - \sigma(\vx)) & y = 1,
\end{cases}
\]
which after some algebraic manipulation can be written as:
\[
f(\vx) = \ln(1 + \exp(-y \va^T \vx)).
\]
Defining $h(t) = \ln(1 + \exp(t))$, we obtain the convex-onto-linear form.

\paragraph{The APROX model \cite{aprox}}
A cost function $\phi$ which is bounded below, where w.l.o.g we assume that it is bounded below by zero, is approximated by:
\[
f(\vx) = \max(\phi(\vx_{t}) + \langle \nabla \phi(\vx_t), \vx - \vx_t \rangle, 0).
\]
It's similar to a linear approximation, but it also incorporates knowledge about the function's lower bound: if a cost function is non-negative, its approximation should also be. Taking $h(t)=\max(t, 0)$, $\va = \nabla \phi(\vx_t)$, and $b = \phi(\vx_t) - \langle \phi(\vx_t), \vx_t \rangle$, we obtain the convex-onto-linear form \eqref{eq:cvx_lin}.

\paragraph{The prox-linear approximation}
Assume we are given a cost function of the form
\[
\phi(\vx) = h(g(\vx)),
\]
where the \emph{outer} function $h$ is convex, and the \emph{inner} function $g$ is an arbirary function, such as a deep neural network. For instance, suppose we're solving a classification problem using a neural network $u$, whose output is fed to the sigmoid, and then to the binary cross-entropy loss. Similarly to the logistic regression setup above, given a label $y \in \{-1, 1\}$, and an input $\vv{w}$, we obtain:
\[
\phi(\vx) = \ln(1 + \exp(-y u(\vx, \vv{w})).
\]
Taking $h(t) = \ln(1+\exp(t))$ and $g(\vx) = -y u(\vx, \vv{w})$ we have the
desired form.

The idea of the prox-linear method is to linearly approximate the inner function around the current iterate, while leaving the outer function as is. Thus, we obtain the following approximation of the cost $\phi$
\[
f(\vx) = h(g(\vx_t) + \langle \nabla g(\vx_t), \vx - \vx_t \rangle),
\]
and taking $\va = \nabla g(\vx)$, and $b = g(\vx_t) - \langle \nabla g(\vx_t), \vx_t\rangle$ we obtain the desired convex-onto-linear form \eqref{eq:cvx_lin}. See the recent work of \cite{drusvyatskiy_proxlinear} and references therein for extensive anaysis and origins of the method.

\subsection{The proximal operator}
Having shown a variety of potential applications, let's explore the proximal operator of convex-onto-linear functions. Its computation amounts to solving the following problem:
\begin{equation}\label{eq:prox_op_cvx_lin}
\min_{\vx} \quad 
h(\va^T \vx + b) + \frac{1}{2\eta} \| \vx - \vx_t \|_2^2.
\end{equation}
We'll tackle this problem, and most of the remaining cost families, using the well known \emph{convex duality} framework, which we briefly introduce here for completeness. An extensive introduction can be found in many optimization textbooks, such as \cite{beck_fom}.

\subsubsection{Convex duality}
We make a brief introduction to a subset of convex duality theory for unfamiliar readers who would like to understand how we built our framework and how to extend it for their own purposes. Since we don't need most generic convex duality theory this paper, we introduce a simplification. Suppose we're given an optimization problem:
\[\label{eq:lin_constr_prob}\tag{Q}
\min_{\vx} \quad f(\vx) \quad \text{s.t.} \quad \vA \vx = \vb, 
\]
where $\vA \in \reals^m \times \reals^n$ is a matrix, and $f: \reals^n \to \extreals$ is a closed and convex extended real-valued function. Define\[
q(\vv{s}) = \inf_\vx \left\{
\mathcal{L}(\vx, \vs) \defeq f(\vx) + \vs^T (\vA \vx - \vb)
\right\},
\]
namely, we replace the $j^{\text{th}}$ linear constraint by a "price" $s_j$ for its violation, and define $q$ to be the optimal value as a function of these prices. The modified cost function $\mathcal{L}$ is called the \emph{Lagrangian} associated with the constrained problem \eqref{eq:lin_constr_prob}.

First, it's apparent that $q(\vs)$ is \emph{concave}, since it is a minimum of linear functions of $\vs$. Moreover, it's easy to see that $q(\vs)$ is a lower bound for the optimal value of \eqref{eq:lin_constr_prob}, using the simple observation that minimizing over a subset of $\reals^n$ produces a value that is higher or equal to the minimization over the entire space:
\begin{align*}
    q(\vs) 
     &= \inf_\vx \left\{f(\vx) + \vs^T (\vA \vx - \vb) \right\} \\
     &\leq \inf_\vx \left\{f(\vx) + \vs^T (\vA \vx - \vb) : \vA \vx = \vb \right\} \\
     &= \inf_\vx \left\{f(\vx) : \vA \vx = \vb \right\}
\end{align*}
The problem of finding the "best" lower bound is called the \emph{dual problem} associated with \eqref{eq:lin_constr_prob}, namely:
\[\label{eq:lin_constr_dual}\tag{D}
\max_\vs \quad q(\vs) \quad \text{s.t.} \quad q(\vs) > -\infty,
\]
whereas the original problem \eqref{eq:lin_constr_prob} is called the \emph{primal} problem. A well known result in convex analysis is that with slight technical conditions, the optimal values of the primal and the dual problems coincide:
\begin{theorem}[Strong Duality]\label{thm:strong_duality}
Suppose that $f$ is a convex closed extended real-valued function, that the optimal value of \eqref{eq:lin_constr_prob} is finite, namely,
\[
f_\mathrm{opt} = \inf_\vx \left\{f(\vx) : \vA \vx = \vb \right\} > -\infty,
\]
and that there exists some feasible solution $\hat{\vx}$. Then,
\begin{enumerate}[label=(\roman*)]
    \item The optimal value of the dual problem \eqref{eq:lin_constr_dual} is attained at some optimal solution $\vs^*$, and it is equal $f_\mathrm{opt}$.
    \item The optimal solutions of \eqref{eq:lin_constr_prob} are $\vx^* \in \argmin_\vx \mathcal{L}(\vx, \vs^*)$ which are feasible. In particular, if $\mathcal{L}(\vx, \vs^*)$ has a unique minimizer, then it must be an optimal solution of \eqref{eq:lin_constr_prob}.
\end{enumerate}
\end{theorem}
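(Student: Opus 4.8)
The plan is to reduce strong duality to a single fact about the \emph{value function} of the problem. Define
\[
v(\vu) \defeq \inf_{\vx}\bigl\{ f(\vx) : \vA\vx - \vb = \vu \bigr\},
\]
so that $v(\vv{0}) = f_{\mathrm{opt}}$ is exactly the primal optimum, and, because $f$ is convex and the constraint affine, $v$ is a convex function on $\reals^m$ whose effective domain $\dom v = \vA(\dom f) - \vb$ is nonempty (it contains $\vv 0$, as $\hat{\vx}$ is feasible). A one-line computation gives the link to the dual: $q(\vs) = \inf_{\vu}\{ v(\vu) + \langle \vs, \vu\rangle\}$, so $q(\vs)$ is the value at $\vv 0$ of the largest affine minorant of $v$ of slope $-\vs$, and $\max_\vs q(\vs)$ is the value at $\vv 0$ of the largest affine minorant of $v$ of any slope. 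Weak duality (already established in the text) says this is at most $v(\vv 0) = f_{\mathrm{opt}}$; what remains is to prove equality and that the maximum is attained.

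The core step is to exhibit a subgradient $\vs^*$ of $v$ at $\vv 0$, that is, $v(\vu) \ge v(\vv 0) - \langle \vs^*, \vu\rangle$ for all $\vu$ --- geometrically, a non-vertical supporting hyperplane of $\operatorname{epi}(v)$ at $(\vv 0, f_{\mathrm{opt}})$. Any supporting hyperplane there has normal $(\boldsymbol{\zeta}, \lambda)$ with $\lambda \ge 0$ (let the vertical coordinate run to $+\infty$ inside the epigraph); if $\lambda > 0$ we rescale to $\lambda = 1$, and substituting the epigraph points $(\vA\vx - \vb, f(\vx))$ into the supporting inequality yields $f(\vx) + \langle \boldsymbol{\zeta}, \vA\vx - \vb\rangle \ge f_{\mathrm{opt}}$ for every $\vx$, i.e. $q(\boldsymbol{\zeta}) \ge f_{\mathrm{opt}}$. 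With weak duality this forces $q(\boldsymbol{\zeta}) = f_{\mathrm{opt}}$, so $\vs^* = \boldsymbol{\zeta}$ settles part (i).

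Hence everything hinges on excluding a \emph{vertical} supporting hyperplane ($\lambda = 0$), and this is the \textbf{main obstacle}: such a hyperplane can occur exactly when $\vv 0$ lies on the boundary of $\dom v = \vA(\dom f) - \vb$. The clean way to rule it out is to assume some feasible point lies in $\operatorname{ri}(\dom f)$, which forces $\vv 0 \in \operatorname{ri}(\dom v)$ via $\operatorname{ri}(\vA\cdot\dom f) = \vA\cdot\operatorname{ri}(\dom f)$; at a relative-interior point of its domain a proper convex function is subdifferentiable, and properness of $v$ follows from weak duality, which furnishes an affine lower bound. This is the ``slight technical condition'' alluded to in the text: I would state the theorem with this refinement, or cite the strong-duality theorem for linearly constrained problems from \citet{beck_fom}; in every application in this paper $f$ is finite everywhere or coercive with full-dimensional domain, so the refinement is automatic. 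The closedness of $f$ is what lets one pass to $\operatorname{cl}(\operatorname{epi} v)$ safely and guarantees that primal minimizers, when they exist, are genuine minimizers of $f$.

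Part (ii) is then pure bookkeeping, essentially complementary slackness. For a feasible $\vx^*$ one has $\mathcal{L}(\vx^*, \vs^*) = f(\vx^*)$, so the chain $f_{\mathrm{opt}} = q(\vs^*) = \inf_{\vx}\mathcal{L}(\vx, \vs^*) \le \mathcal{L}(\vx^*, \vs^*) = f(\vx^*)$ shows that $\vx^*$ is primal-optimal if and only if it is feasible and lies in $\argmin_\vx \mathcal{L}(\vx, \vs^*)$. The ``in particular'' clause follows because, whenever a primal optimum exists, it lies in that $\argmin$ and hence coincides with the unique minimizer when one exists. I expect no real difficulty here beyond writing this chain of (in)equalities carefully.
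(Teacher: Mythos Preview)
Your proposal is correct and far more substantive than what the paper actually does: the paper's entire ``proof'' is a two-line citation to \citet{beck_fom} (Theorem A.1 for part (i) and a companion result for part (ii)), with no argument given. You instead sketch the standard perturbation/value-function proof: define $v(\vu)=\inf\{f(\vx):\vA\vx-\vb=\vu\}$, identify $q(\vs)$ with the supremum of affine minorants of $v$ at the origin, and reduce strong duality to subdifferentiability of $v$ at $\vv 0$. This is exactly the machinery behind the cited textbook theorem, so in spirit the approaches agree; yours simply unpacks what the paper outsources.

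One point worth highlighting: you correctly flag that the hypotheses as stated (merely ``there exists some feasible $\hat\vx$'') are not quite enough to exclude a vertical supporting hyperplane, and that the clean fix is to require $\hat\vx\in\operatorname{ri}(\dom f)$. The paper glosses over this by deferring to the textbook; your observation that every application in the paper has $f$ finite everywhere (so the qualification is automatic) is the right way to reconcile the informal statement with its intended use. Your treatment of part (ii) via the chain $f_{\mathrm{opt}}=q(\vs^*)\le\mathcal L(\vx^*,\vs^*)=f(\vx^*)$ is the standard complementary-slackness argument and is fine.
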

\begin{proof}
(i) is a special case of Theorem A.1 and (ii) is a special case of Theorem (ii) in \cite{beck_fom}.
\end{proof}
The strong duality theorem has an important consequence for the case when the dual problem \eqref{eq:lin_constr_dual} significantly easier to solve than the primal problem \eqref{eq:lin_constr_prob}. Having obtained its optimal solution $\vs^*$, we can recover the optimal solution of the primal problem by minimizing the Lagrangian function $\mathcal{L}(\vx, \vs^*)$ over $\vx$.

\subsubsection{Employing duality}
Duality requires a constrained optimization problem, whereas the proximal operator in Equation \eqref{eq:prox_op_cvx_lin} aims to solve an unconstrained problem. However, constraints are easily added with the help of an auxiliary variable, and we can equivalently solve:
\[
\min_{\vx, z} \quad h(z) + \frac{1}{2\eta} \|\vx - \vx_t \|_2^2 \quad \text{s.t.} \quad z = \va^T \vx + b
\]
The dual objective is therefore:
\begin{align*}
    q(s) &= \inf_{\vx, z} \left \{
        h(z) + \frac{1}{2\eta} \|\vx - \vx_t \|_2^2 + s(\va^T \vx + b - z)
    \right\} \\
    &= \underbrace{ \min_{\vx} \left\{ \frac{1}{2\eta} \|\vx - \vx_t \|_2^2 + s\va^T \vx  \right\} }_{A} + \underbrace{ 
    \inf_z \{ h(z) - s z \}}_{B} + s b 
\end{align*}
The term denoted by $A$ is a strictly convex quadratic function, whose minimizer is
\begin{equation}\label{eq:cvx_lin_recover}
    \vx^* = \vx_t - \eta s \va,
\end{equation}
and the minimum itself is
\[
A = -\frac{\eta \| \va \|_2^2}{2} s^2 + (\va^T \vx_t) s.
\]
The term denoted by B can be alternatively written as
\[
B = -\sup_z \left\{ s z - h(z) \right\} = -h^*(s),
\]
where $h^*$ is a well-known object in optimization called the \emph{convex conjugate} of $h$. A catalogue of pairs of convex conjugate functions is available in a variety of standard textbooks on optimization, e.g. \cite{beck_fom}. For completeness, in Table \ref{tab:conjugates} we show conjugate pairs which are useful for the machine-learning oriented examples in this paper.
\begin{table}[hbtb]
    \centering
    {\renewcommand{\arraystretch}{1.3}\begin{tabular}{|c|c|c|c|p{.25\textwidth}|}
        \hline
        $h(z)$ & $\dom(h)$ & $h^*(s)$ & $\dom(h^*)$ & Useful for \\
        \Xhline{4\arrayrulewidth}
        $\frac{1}{2} z^2$ & $\reals$ &  $\frac{1}{2} s^2$ & $\reals$ &  Linear least squares \\
        \hline
        $\ln(1+\exp(z))$ & $\reals$ & $\begin{gathered}s \ln(s) + (1 - s) \ln(1 - s) \\ \text{ where } 0 \ln(0) \equiv 0\end{gathered}$ & $[0,1]$ & Logistic regression \\
        \hline
        $\max(z, 0)$ & $\reals$ & $\begin{cases} 0 & s \in [0, 1] \\ \infty & \text{else} \end{cases} $ & $[0,1]$ & AProx model, Hinge loss \\
        \hline
    \end{tabular}}
    \caption{Example convex conjugate pairs}
    \label{tab:conjugates}
\end{table}
Convex conjugates possess two important properties. First, under mild technical conditions, we have $(h^*)^* = h$, i.e. the bi-conjugate of a convex function is the function itself. These conditions hold for most functions we care about in practice, including the functions in Table \ref{tab:conjugates}. Second, the conjugate is always convex. 

Summarizing the above, the dual problem aims to solve the following \emph{one dimensional} and \emph{strongly concave} maximization problem:
\begin{equation}
    \label{eq:cvx_lin_dual}
    \max_s \quad q(s) \equiv -\underbrace{\frac{\eta \| \va \|_2^2}{2}}_{\frac{\alpha}{2}} s^2 + \underbrace{(\va^T \vx_t + b)}_{\beta} s - h^*(s).
\end{equation}
Strong concavity implies the existance of a unique maximizer $s^*$, while the strong duality theorem implies we can recover the proximal operator we seek by substituting the above maximizer into equation \eqref{eq:cvx_lin_recover}. 

We implement the above idea the \texttt{IncConvexOnLinear} class below using the \texttt{PyTorch} library.
\begin{pycode}
import torch

class IncConvexOnLinear:
    def __init__(self, x, h):
        self._h = h
        self._x = x
        
    def step(self, eta, a, b):
        """
        Performs the optimizer's step, and returns the loss incurred.
        """
        h = self._h
        x = self._x
        
        # compute the dual problem's coefficients
        alpha = eta * torch.sum(a**2)
        beta = torch.dot(a, x) + b
        
        # solve the dual problem
        s_star = h.solve_dual(alpha.item(), beta.item())
        
        # update x
        x.sub_(eta * s_star * a)
        
        return h.eval(beta.item())
    
    def x(self):
        return self._x
\end{pycode}
Note, that from a software engineering perspective, we encode the function $h$ using an object which has two operations: compute the value of $h$, and solve the dual problem. In the next sub-sections we will implement three such objects: \texttt{HalfSquared} for $h(z) = \frac{1}{2} z^2$, \texttt{Logistic} for $h(z) = \ln(1+\exp(z))$, and \texttt{Hinge} for $h(z) = \max(z, 0)$. 

As an example, applying the implicit online learning idea of \cite{kulis2010implicit} to the linear least squares problem looks like this:
\begin{pycode}
x = torch.zeros(d)
optimizer = IncConvexOnLinear(x, HalfSquared())
for t, (a, b) in enumerate(my_data_set):
    eta = get_step_size(t)
    optimizer.step(eta, a, b) 

print('The parameters are: ' + str(x))
\end{pycode}

\subsubsection{The half-squared function}
For  $h(z) = \frac{1}{2} z^2$, according to Table \ref{tab:conjugates}, we have $h^*(s) = \frac{1}{2}s^2$, and thus the dual problem in Equation \eqref{eq:cvx_lin_dual} amounts to maximizing
\[
q(s) = -\frac{\alpha}{2} s^2 + \beta s - \frac{1}{2} s^2 = -\frac{1 + \alpha}{2} s^2 + \beta s.
\]
Hence, in this case $q(s)$ is a simple concave parabola, maximized at
\[
s^* = \frac{\beta}{1 + \alpha}.
\]
Consequently, our \texttt{HalfSquared} class is:
\begin{pycode}
import torch
import math

class HalfSquared:
    def solve_dual(self, alpha, beta):
        return beta / ( 1 + alpha)
        
    def eval(self, z):
        return 0.5 * (z ** 2)
\end{pycode}

\subsubsection{The logistic function}\label{sec:cvx_lin_logistic_class}
For $h(z) = \ln(1 + \exp(z))$, according to Table \ref{tab:conjugates}, the dual problem in  Equation \eqref{eq:cvx_lin_dual} amounts to maximizing
\begin{equation}\label{eq:logistic_dual}
q(s) = -\frac{\alpha}{2} s^2 + \beta s - s \ln(s) - (1 - s) \ln(1 - s).
\end{equation}
The following simple result paves the way towards maximizing $q$.
\begin{proposition}\label{prop:logistic_dual_unique}
The function $q$ defined in Equation \eqref{eq:logistic_dual} has a unique maximizer inside the open interval $(0, 1)$.
\end{proposition}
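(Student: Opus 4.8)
The plan is to regard $q$ as a function on the closed interval $[0,1]$, which is exactly $\dom(h^*)$ once we adopt the convention $0\ln 0 = 0$, and to split the argument into three parts: existence of a maximizer, exclusion of the two endpoints, and uniqueness via strict concavity.

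First I would observe that $q$ is continuous on the compact set $[0,1]$: the only places this could fail are $s=0$ and $s=1$, but there $s\ln s\to 0$ and $(1-s)\ln(1-s)\to 0$, so $q$ extends continuously to the endpoints. By the extreme value theorem $q$ attains its maximum at some $s^\star\in[0,1]$. Next I would rule out the endpoints by inspecting the derivative on the open interval, which a direct computation gives as
\[
q'(s) = -\alpha s + \beta - \ln s + \ln(1-s) = -\alpha s + \beta - \ln\frac{s}{1-s}, \qquad s\in(0,1).
\]
As $s\to 0^+$ we have $\ln\frac{s}{1-s}\to-\infty$, hence $q'(s)\to+\infty$, so $q$ is strictly increasing just to the right of $0$ and $s=0$ cannot maximize $q$; symmetrically $q'(s)\to-\infty$ as $s\to 1^-$, so $s=1$ is excluded as well. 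Therefore $s^\star$ lies in the open interval $(0,1)$.

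Finally, for uniqueness I would differentiate once more,
\[
q''(s) = -\alpha - \frac{1}{s} - \frac{1}{1-s} < 0 \qquad \text{for all } s\in(0,1),
\]
using $\alpha = \eta\|\va\|_2^2 \ge 0$ together with the strict negativity of the last two terms on $(0,1)$. Thus $q$ is strictly concave on $(0,1)$, so a maximizer there is unique, and combined with the previous step this is the unique maximizer of $q$ over $[0,1]$. I expect the only mildly delicate point to be the boundary analysis — keeping the signs of the logarithmic terms straight and confirming that the blow-up of $q'$ at $0$ and $1$ is genuine rather than an artifact of the convention $0\ln 0=0$ — while existence and uniqueness follow immediately from continuity and strict concavity.
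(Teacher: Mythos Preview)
Your proof is correct and follows essentially the same route as the paper: continuity on the compact interval $[0,1]$ plus Weierstrass for existence, the blow-up of $q'$ at the endpoints to force the maximizer into $(0,1)$, and strict concavity for uniqueness. Your version is in fact a bit more careful—you explicitly compute $q''$ rather than merely asserting strict concavity, and your derivative formula $q'(s)=-\alpha s+\beta-\ln s+\ln(1-s)$ is the correct one (the paper's displayed $\ln(1+s)$ is a typo).
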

\begin{proof}
Note that $\dom(q) = [0, 1]$, thus maximizing $q$ is done over a compact interval. Its continuity with the Weirstrass theorem ensures that it has a maximizer in $[0, 1]$, and its strict concavity ensures that the maximizer is unique.  The derivative $q'(s) = -\alpha s + \beta - \ln(s) + \ln(1+s)$ is continuous, decreasing, and satisfies:
\[
\lim_{s \to 0} q'(s) = \infty, \quad \lim_{s \to 1} q'(s) = -\infty.
\]
Hence, there must be a unique point in the open interval $(0, 1)$ where $q'(s) = 0$. Since $q$ is concave, that point must be the maximizer.

$\square$
\end{proof}
Proposition \ref{eq:prox_op_cvx_lin} implies that we can maximize $q$ by employing any root finding algorithm to find a zero of its derivative. Its initial interval $[l, u]$ can be found by:
\begin{itemize}
    \item $l = 2^{-k}$ for the smallest positive integer $k$ such that $q'(2^{-k}) > 0$. 
    \item $u = 1 - 2^{-k}$ for the smallest positive integer $k$ such that $q'(1 - 2^{-k}) < 0$.
\end{itemize}
For our reference implementation, we chose to rely on Brent's method \cite[Chapter~5]{brent2013algorithms} readily available in the \texttt{scipy} package. The method finds a $\varepsilon$-approximate root in $O(\ln(\tfrac{1}{\varepsilon}))$ iterations in the worst case, but is typically significantly faster than naive bisection. The resulting \texttt{Logistic} class is listed below:
\begin{pycode}
from scipy.optimize import brentq

class Logistic:
    def solve_dual(self, alpha, beta, tol = 1e-16):
        def qprime(s):
            return -alpha * s + beta + math.log(1-s) - math.log(s)
        
        # compute [l,u] containing a point with zero qprime
        l = 0.5
        while qprime(l) <= 0:
            l /= 2
        
        u = 0.5
        while qprime(1 - u) >= 0:
            u /= 2
        u = 1 - u
        
        solution = brentq(qprime, l, u)
        return solution
   
    def eval(self, z):
        return math.log(1 + math.exp(z))
\end{pycode}

\subsubsection{The Hinge function}
For $h(t) = \max(0, t)$, according to Table \ref{tab:conjugates}, the dual problem in Equation \eqref{eq:cvx_lin_dual} amounts to solving the following constrained problem:
\[
\max_s \quad q(s) = -\frac{\alpha}{2} s^2 + \beta s \quad \text{s.t.} \quad s \in [0, 1].
\]
The above is a maximum of a concave parabola over an interval, and it's easy to see that its maximizer is
\[
s^* = \max\Bigl(0, \min\Bigl(1, \frac{\beta}{\alpha}\Bigr)\Bigr).
\]
The resulting \texttt{Hinge} class is therefore:
\begin{pycode}
class Hinge:
    def solve_dual(self, alpha, beta):
        return max(0, min(1, beta / alpha))
        
    def eval(self, z):
        return max(0, z)
\end{pycode}

\subsection{Empirical evaluation}
To evaluate the efficiency of our implementation, we compare it against a regular incremental gradient method. Our algorithms in this section are meant to work on one training sample at a time, but incremental gradient methods are usually used with mini-batches of samples. Thus, we'd like to see how competitive is our implementation against incremental gradient methods with various mini-batch sizes, including a mini-batch size of one sample, to appreciate the usefulness of our implementation as a tool by the research community.

We applied our evaluation on the Logistic regression and Least Squares problems, one uses the \texttt{Logistic} class whereas the other uses the \texttt{HalfSquared} class. We measured the total time to make one epoch over data-sets of various dimensions and of various lengths, and plotted a regression line, whose slope allows us to convince ourselves than our implementation is slower than a regular incremental gradient method by a modest constant factor. The results are plotted in Figure \ref{fig:cvx_lin_empirical_speed}. To summarize, the incremental proximal point method we devised here is roughly 2-8 times slower, per iteration, than an incremental gradient method when mini-batches are used, but faster than an incremental gradient method without mini-batching. It's also interesting to point out that without mini-batching, our implementation is \emph{faster} than PyTorch, mainly due to an overhead of its AutoGrad mechanism with small mini-batches. Therefore, we do not believe it to be a ``fair'' comparison, but merely a part of the demonstration that our implementation is good enough to be useful.

\begin{figure}
    \centering
    \includegraphics[width=\textwidth]{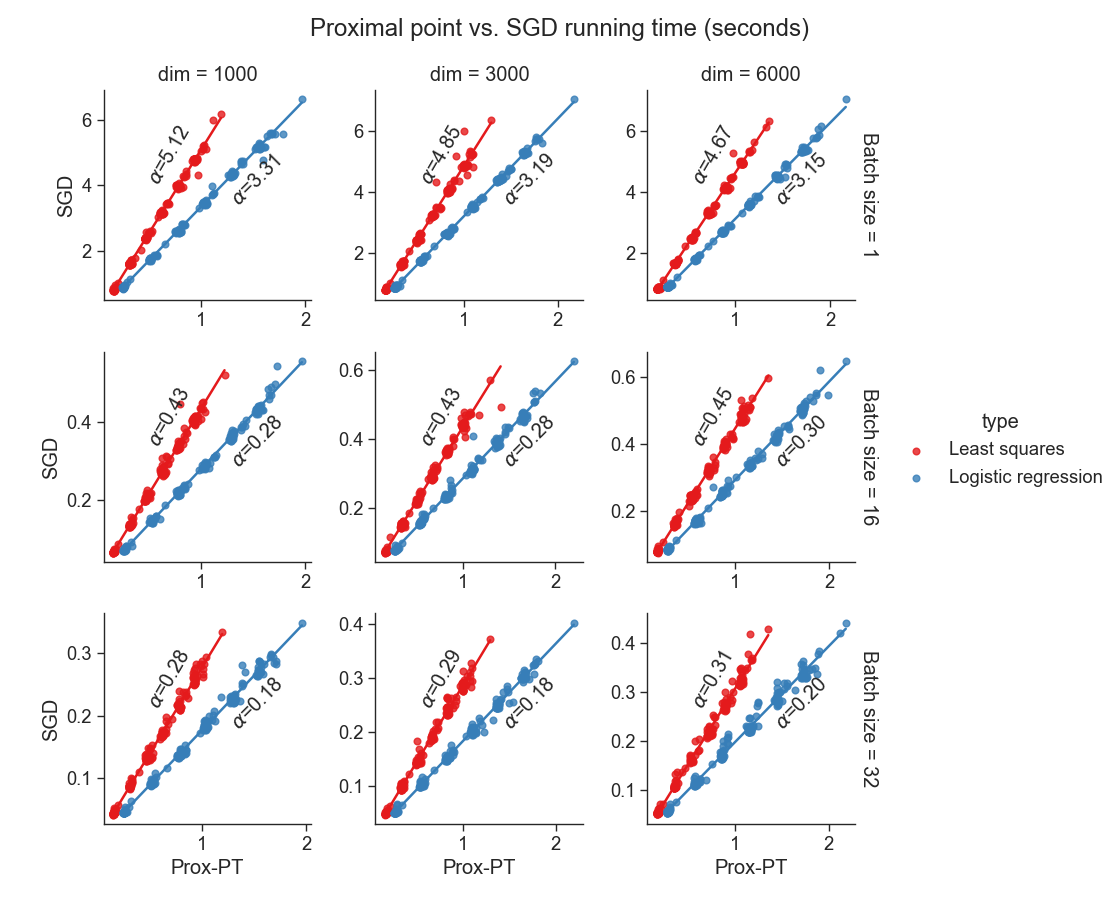}
    \caption{Execution speed evaluation of incremental proximal point. Each point is a timing of a pair of experiments on the same problem, where the $x$ coordinate is the execution time of one SGD epoch, whereas the $y$ coordinate is the execution time of one proximal point epoch. The corresponding line is a linear regression line, with its slope $\alpha$ labeled, to appreciate the ratio between the SGD and proximal point execution times, on average. The columns are various problem dimensions, from 1000 to 6000, and the rows are various mini-batch sizes for the incremental gradient method. We can see by the first row, for example, that without mini-batching the proximal-point method is actually faster than an incremental gradient method  ($\alpha > 0$) based on PyTorch's automatic differentiation. In the last row, for example, we can see that for batch sizes of 32 samples the incremental proximal point method is roughly 4 times slower for least-squares problems and 6 times slower for logistic regression problems compared to an incremental gradient method.}
    \label{fig:cvx_lin_empirical_speed}
\end{figure}

To convince ourselves that our implementation is indeed correct, we reproduce the results seen in \cite{asi_duchi_better_models} which show that proximal-point methods are much more robust to step-size choice. For randomly generated logistic regression and least-squares problems, we solve both problems using a range of step-sizes by performing one epoch over a random shuffle of the data, and measure the training loss of that epoch. We repeat the experiment for each step-size 30 times to obtain a confidence band around the results. The problems are of dimension 100, and the data-set size is 100,000. So ideally, one epoch should simulate a large random sample from a stationary distribution. The results are plotted in Figure \ref{fig:cvxlin_stability_reproduction}. Indeed, the results look similar to what the authors of \cite{asi_duchi_better_models} have obtained.

\begin{figure}
    \centering
    \includegraphics[width=\textwidth]{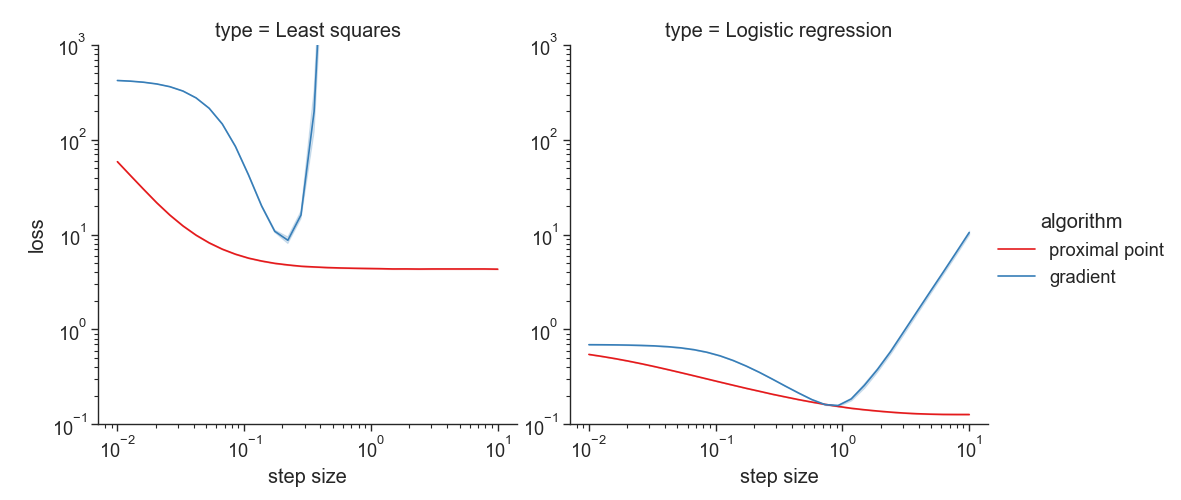}
    \caption{Reproduction of the stability results of \cite{asi_duchi_better_models}. Indeed, for both logistic regression and least squares problems, the training loss of an incremental proximal-point method is significantly more stable w.r.t the step-size choice, reinforcing the claim that with an incremental proximal-point method we can just "make an few educated guesses" instead of performing an extensive hyper-parameter search for the best step-size.}
    \label{fig:cvxlin_stability_reproduction}
\end{figure}

\subsection{Summary}
Typical optimizers for machine learning rely on a well-established tool from analysis - the gradient. Using duality, we were able to design an optimizer for the convex onto linear setup using another well established tool - the convex conjugate. Moreover, duality allowed us to decouple the complexity associated with the convex function $h$ into a separate class whose only purpose is using the conjugate to solve a one dimensional optimization problem. Thus the framework is extensible with additional functions $h$, and designing a class for such $h$ comprises of two steps: obtaining the conjugate $h^*$, preferably from a textbook, and figuring out how to solve the resulting dual problem.

\section{Regularized convex onto linear composition}\label{sec:cvx_lin_reg}
Often, machine learning models are trained by using a regularized loss function. Thus, the applications of the regularized convex-onto-linear model \eqref{eq:cvx_lin_reg} are identical to the described in Section \ref{sec:cvx_lin_app}, e.g. regularized linear least squares, or regularized logistic regression. It's also interesting to note that, using $h(z)=\max(0, z)$ and $r(\vx) = \mu \| \vx \|_2^2$ we can also formulate the problem of training an SVM \cite{cortes1995support}. Since the applications are obvious, we dive directly into the computation of the proximal operator, by solving
\begin{equation}\label{eq:cvx_lin_reg_prox}
    \min_{\vx} \quad h(\va^T \vx + b) + r(\vx) + \frac{1}{2\eta} \|\vx - \vx_t \|_2^2,
\end{equation}
where $h: \reals \to \extreals$, $r: \reals^d \to \extreals$, are convex extended real-valued functions. We also assume that the regularizer $r$ is "simple", meaning that we can efficiently compute its proximal operator $\prox_r(\vx)$.  Explicit formulae for the proximal operators can be found in a variety of textbooks on optimization, e.g. \cite[chap.~6]{beck_fom}. Examples include the squared Euclidean norm $r(\vx) = \frac{\mu}{2} \|\vx\|_2^2$, the Euclidean norm $r(\vx) = \mu \| \vx \|_2$, or the $\ell_1$ norm $r(\vx) = \mu \| \vx \|_1$. For completeness, we include a short table of proximal operators of the above-mentioned functions in Table \ref{tab:prox_operators}.

\begin{table}[htbp]
    \centering
    \begin{tabular}{|c|c|p{.5\textwidth}|}
    \hline
    $r(\vx)$ & $\prox_{\mu r}(\vx)$ & Remarks \\
    \Xhline{4\arrayrulewidth}
    $\|\vx\|_1$ & $\max(|\vx| - \mu \mathbf{1}, \mathbf{0}) \cdot \sgn(\vx)$ & Absolute value, $\max$, $\sgn$, and $\cdot$ are component-wise operations. Available as the \texttt{softshrink} function in PyTorch. \\
    \hline
    $\half \|\vx\|_2^2$ & $\frac{1}{1+\mu} \vx$ & \\
    \hline
    $\|\vx\|_2$ & $\left(1 - \frac{\mu}{\max(\mu, \|\vx\|_2)} \right) \vx$ & \\
    \hline
    \end{tabular}
    \caption{Proximal operators of commonly used regularizers}
    \label{tab:prox_operators}
\end{table}

\subsection{A dual problem}
Following the path paved in Section \ref{sec:cvx_lin}, we begin from deriving a dual to the problem in Equation \eqref{eq:cvx_lin_reg_prox}. Adding the auxiliary variable $z = \va^T \vx + b$, we obtain the equivalent optimization problem
\[
\min_{\vx, z} \quad h(z) + r(\vx) + \frac{1}{2\eta} \|\vx - \vx_t\|_2^2 \quad \text{s.t.} \quad z = \va^T \vx + b
\]
Minimizing the Lagrangian, we obtain:
\begin{equation}\label{eq:cvx_lin_reg_dual}
\begin{aligned}
q(s) &= \inf_{\vx, z} \left\{ \mathcal{L}(\vx, z, s) \equiv h(z) + r(\vx) + \frac{1}{2\eta} \|\vx - \vx_t\|_2^2 + s(\va^T \vx + b - z) \right\} \\
 &= \inf_{\vx} \biggl\{ r(\vx) + \frac{1}{2\eta} \|\vx - \vx_t\|_2^2 + s \va^T \vx \biggr\} + \inf_z \{ h(z) - s z \} + s b 
\end{aligned}
\end{equation}
The remaining challenge is the computing the first infimum. To that end, we need to introduce another well-known concept in optimization - a close relative of the proximal operator.
\begin{definition}[Moreau Envelope \cite{moreau_envelope}]
Let $\phi: \reals^n \to \extreals$ be a convex extended real-valued function. The Moreau envelope of $\phi$ with parameter $\eta$, denoted by $M_\eta \phi$, is the function:
\[
M_\eta \phi(\vx) = \min_\vu \left\{ \phi(\vu) + \frac{1}{2\eta} \|\vu - \vx\|_2^2 \right\}.
\]
\end{definition}
Since the proximal operator is the minimizer of the minimum in the definition above, which is Moreau proved that is always attained, an alternative way to write the Moreau envelope of a function is obtained by replacing $\vu = \prox_{\eta \phi}(\vx)$ inside the minimization objective above:
\begin{equation}\label{eq:envelope_prox_formula}
M_\eta \phi(\vx) = \phi(\prox_{\eta \phi}(\vx)) + \frac{1}{2\eta} \|\prox_{\eta \phi}(\vx) - \vx  \|_2^2.
\end{equation}
Thus, whenever we have an explicit formula of the proximal operator, we also have an explicit formula of the Moreau envelope. How does it help us with our challenge of minimizing $\mathcal{Q}$ over $\vx$? The following proposition provides the answer.
\begin{proposition}\label{prop:min_Q}
Let $q(s)$ and the Lagrangian $\mathcal{L}(\vx, z, s)$ be as defined in Equation \eqref{eq:cvx_lin_reg_dual}. Then,
\[
q(s) = M_{\eta} r(\vx_t - \eta s \va) + (\va^T \vx_t + b) s - \frac{\eta \|\va\|_2^2}{2} s^2 - h^*(s).
\]
Moreover, the unique minimizer of the Lagrangian $\mathcal{L}$ w.r.t $\vx$ is 
\[
\vx^* = \prox_{\eta r}(\vx_t - \eta s \va).
\]
\end{proposition}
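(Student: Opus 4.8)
The plan is to massage the first infimum in Equation~\eqref{eq:cvx_lin_reg_dual} into a Moreau envelope by completing the square, and to read off the minimizer along the way. Since the two inner infima in \eqref{eq:cvx_lin_reg_dual} are over disjoint blocks of variables, they can be handled separately: $\inf_z\{h(z) - sz\} = -\sup_z\{sz - h(z)\} = -h^*(s)$ directly from the definition of the convex conjugate, so all the real work is in the $\vx$-infimum
\[
\inf_\vx \Bigl\{ r(\vx) + \frac{1}{2\eta}\|\vx - \vx_t\|_2^2 + s\,\va^T \vx \Bigr\}.
\]

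First I would collect the $\vx$-dependent quadratic and linear terms and complete the square, obtaining
\[
\frac{1}{2\eta}\|\vx - \vx_t\|_2^2 + s\,\va^T\vx = \frac{1}{2\eta}\bigl\| \vx - (\vx_t - \eta s\va) \bigr\|_2^2 + (\va^T\vx_t)\,s - \frac{\eta\|\va\|_2^2}{2}s^2,
\]
where the $\vx$-independent correction terms are exactly what is left over after expanding $\|\vx_t - \eta s\va\|_2^2$. The last two terms may be pulled out of the infimum; together with the $sb$ term coming from the $z$-separation above, they assemble into $(\va^T\vx_t + b)s - \tfrac{\eta\|\va\|_2^2}{2}s^2$. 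What is left inside the infimum is $\inf_\vx\{ r(\vx) + \frac{1}{2\eta}\|\vx - (\vx_t - \eta s\va)\|_2^2 \}$, which is by definition $M_\eta r(\vx_t - \eta s\va)$. Combining everything yields the claimed expression for $q(s)$.

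For the minimizer statement, I would note that the map $\vx \mapsto r(\vx) + \frac{1}{2\eta}\|\vx - (\vx_t - \eta s\va)\|_2^2$ is $\tfrac{1}{\eta}$-strongly convex (the quadratic term supplies strong convexity and $r$ is convex by hypothesis), so it has a unique minimizer, and Moreau's theorem guarantees the minimum is attained; by definition of the proximal operator that minimizer is $\prox_{\eta r}(\vx_t - \eta s\va)$. Since the square-completion above is an exact identity and the remaining pieces ($h(z) - sz$, $sb$, and the quadratic in $s$) do not depend on $\vx$, this point is also the unique minimizer of $\mathcal{L}(\vx, z, s)$ in $\vx$ for fixed $s$.

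There is no genuinely hard step here: the proposition is essentially a change of variables plus bookkeeping. The only points requiring care are tracking the constant-in-$\vx$ terms produced by completing the square, and being explicit that convexity of $r$ (beyond mere existence of the minimizer, which is Moreau's result) is what delivers uniqueness — and this is given.
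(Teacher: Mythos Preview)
Your proposal is correct and follows essentially the same route as the paper: complete the square in the $\vx$-infimum to recognize a Moreau envelope at the shifted point $\vx_t - \eta s\va$, collect the leftover $\vx$-independent terms, and identify the minimizer as $\prox_{\eta r}(\vx_t - \eta s\va)$. Your explicit appeal to $\tfrac{1}{\eta}$-strong convexity for uniqueness is a small addition of rigor over the paper's version, which simply asserts the minimizer without justifying uniqueness.
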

\begin{proof}
Recall, that for any $\vx, \vy$ we can open the squared Euclidean norm using the formula
\[
\half \|\vx + \vy\|_2^2 = \half \| \vx \|_2^2 + \vx^T \vy + \half \| \vy \|_2^2,
\]
and re-arranging the above leads to the \emph{square completion} formula
\[
\frac{1}{2}\|\vx\|_2^2+ \vx^T \vy = \frac{1}{2}\|\vx + \vy\|_2^2 - \frac{1}{2}\|\vy\|_2^2.
\]
Using the above two formulas, we compute:
\begin{align*}
r(\vx) &+\frac{1}{2\eta} \|\vx-\vx_t\|_2^2 + s \va^T \vx \\
 &= \frac{1}{\eta} \left[ \eta r(\vx) +  \frac{1}{2} \|\vx - \vx_t\|_2^2 + \eta s \va^T \vx \right] & \leftarrow \text{Factoring out } \frac{1}{\eta} \\
 &= \frac{1}{\eta} \left[ \eta r(\vx) + {\color{black!50}\frac{1}{2} \|\vx\|_2^2 - (\vx_t - \eta s \va )^T \vx} + \frac{1}{2} \|\vx_t\|_2^2 \right] & \leftarrow \text{opening } \frac{1}{2}\|\vx-\vx_t\|_2^2 \\
 &= \frac{1}{\eta} \left[ \eta  r(\vx) + {\color{black!50}\frac{1}{2} \|\vx - (\vx_t - \eta s \va)\|_2^2 - \frac{1}{2} \|\vx_t - \eta s \va\|_2^2 } + \frac{1}{2} \|\vx_t\|_2^2 \right] & \leftarrow \text{square completion}\\
 &= \left[ r(\vx)+\frac{1}{2\eta} \|\vx - (\vx_t - \eta s \va)\|_2^2  \right] - \frac{1}{2\eta} \|\vx_t - \eta s \va\|_2^2 + \frac{1}{2\eta} \|\vx_t\|_2^2  &\leftarrow{\text{Multiplying by }\frac{1}{\eta}} \\
 &= \left[ r(\vx)+\frac{1}{2\eta} \|\vx - (\vx_t - \eta s \va)\|_2^2  \right] + (\va^T \vx_t) s - \frac{\eta \|\va\|_2^2}{2} s^2 &\leftarrow{\text{re-arranging}}
\end{align*}
Plugging the above expression into the formula of $q(s)$, we obtain:
\begin{align*}
q(s) 
 &= \inf_{\vx} \left \{ r(\vx)+\frac{1}{2\eta} \|\vx - (\vx_t - \eta s \va)\|_2^2 \right\} + (\va^T \vx_t + b) s - \frac{\eta \|\va\|_2^2}{2} s^2 - h^*(s) \\
 &= M_\eta r(\vx_t - \eta s \va) + (\va^T \vx_t + b) s - \frac{\eta \|\va\|_2^2}{2} s^2 - h^*(s)
\end{align*}
Moreover, since the infimum over $\vx$ above is attained, we can replace it with a minimum, and by definition the minimizer is
\[
\vx^* = \prox_{\eta r}(\vx_t - \eta s \va)
\]
$\square$
\end{proof}

The significance of Proposition \ref{prop:min_Q} is due to the fact that we can design an algorithm for computing the proximal operator of regularized convex onto linear losses using three textbook concepts: the Moreau envelope of $r$, the convex conjugate of $h$, and the proximal operator of $r$. The only thing we need to manually derive ourselves is a way to maximize the dual objective $q$. The basic method, directly applying Proposition \ref{prop:min_Q} and the strong duality theorem (Theorem \ref{thm:strong_duality}) consists of the following three steps:
\begin{itemize}
    \item Form $q(s) = M_{\eta} r(\vx_t - \eta s \va) + (\va^T \vx_t + b) s - \frac{\eta \|\va\|_2^2}{2} s^2 - h^*(s)$.
    \item Solve the dual problem: find a maximizer $s^*$ of $q(s)$
    \item Compute $\prox_f(\vx_t) = \prox_{\eta r}(\vx_t - \eta s^* \va)$
\end{itemize}

\subsection{Computing the proximal operator}\label{sec:cvx_lin_reg_prox_impl}
An important aspect of solving the dual problem is figuring out $\dom(-q)$, since this set encodes the constraints of the dual problem. It turns out that Moreau envelopes are finite everywhere \cite[Theorem~6.55]{beck_fom}, and thus $\dom(-q) = \dom(h^*)$. Moreover, the strong duality theorem (Theorem \ref{thm:strong_duality}) ensures that $q$ attains its maximum, so a maximizer $s^* \in \dom(h^*)$ \emph{must} exist.

The dual problem is \emph{one dimensional}, and there is a variety of reliable algorithms and their implementations for maximizing such functions.  Indeed, this time we will not bother implementing a procedure to maximize $q$, but use a readily available implementations in the \texttt{SciPy} package. When $\dom(-q)$ is a compact interval, we will employ the \texttt{scipy.optimize.fminbound} function on $-q$, which uses Brent's method \cite[Chapter~5]{brent2013algorithms}, and requires us to provide a function, and a compact interval where its maximizer must lie.

When the interval $\dom(-q)$ is not compact, we will have to locate a compact interval which contains a maximizer of $q$. To that end, we will require $h^*$ to be continuously differentiable, strictly convex, and $\dom(h^*)$ to be open. The object representing $h$ needs to provide two sequences $l_1 > l_2 > \dots$ converging to the left endpoint of $\dom(-q)$, and $u_1 < u_2 < \dots$ converging to the right endpoint of $\dom(-q)$. We will shortly see the details, but the general idea of using these sequences is similar to our search for an initial interval in case of the \texttt{Logistic} class we implemented in Section \ref{sec:cvx_lin_logistic_class}, where the sequences were $2^{-1}, 2^{-2}, \dots$, and $1-2^{-1}, 1-2^{-2}, \dots$. Having found an interval containing a maximizer, we will, again, employ the \texttt{scipy.optimize.fminbound} on $-q$ function to find our maximizer.

\paragraph{Remark}
Note, that the Scipy's \texttt{scipy.optimize.minimize\_scalar} function also supports finding the initial \emph{bracket} for Brent's method when we don't possess a compact interval containing a minimizer. However, as of the time of writing of this paper, the \texttt{minimize\_scalar} function is not able to handle half-infinite, such as $[0, \infty)$. It supports either a compact domain, or the entire real line. Thus, to be as generic as possible, we resort to finding the initial interval ourselves using the above-mentioned pairs of sequences.

\paragraph{The case of a compact domain}
To employ Brent's method for finding a maximizer $s^*$ of $q$, we need to be able to evaluate the function $q$. To that end, we require an oracle for evaluating the Moreau envelope of $r$, and the convex conjugate $h^*$. Moreover, to recover the solution of the primal problem, we need an oracle for computing the proximal operator of $r$. 

\paragraph{The case of a non-compact domain}
This case requires us to employ an additional result about Moreau envelopes.
\begin{proposition}\label{prop:envelope_properties}
Let $\phi: \reals^d \to \extreals$ be a convex extended real-valued function, and let $M_\mu \phi$ be its Moreau envelope. Then $M_\mu \phi$ is continuously differentiable with gradient 
\begin{equation}\label{eq:moreau_gradient}
    \nabla M_\mu \phi(\vx) = \frac{1}{\mu}\left( \vx - \prox_{\mu \phi}(\vx) \right)
\end{equation}
\end{proposition}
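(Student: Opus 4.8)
The plan is to exploit the explicit description of the Moreau envelope through its minimizer, Equation~\eqref{eq:envelope_prox_formula}, combined with the very same square-completion identity already used in the proof of Proposition~\ref{prop:min_Q}. Before starting, I would record two facts about $M_\mu \phi$ that make the argument run smoothly. First, $M_\mu \phi$ is convex on all of $\reals^d$: it is the partial minimization over $\vu$ of the jointly convex function $(\vu, \vx) \mapsto \phi(\vu) + \frac{1}{2\mu}\|\vu - \vx\|_2^2$ (equivalently, the infimal convolution of $\phi$ with $\frac{1}{2\mu}\|\cdot\|_2^2$). Second, it is finite everywhere \citep[Theorem~6.55]{beck_fom}. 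A finite convex function on $\reals^d$ is continuous and hence subdifferentiable at every point, a property I will use below.

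Next I would fix an arbitrary $\vx \in \reals^d$, set $\vv{p} \defeq \prox_{\mu\phi}(\vx)$ (which exists and is unique by Moreau's theorem, since the minimand is strictly convex), and $\vv{g} \defeq \frac{1}{\mu}(\vx - \vv{p})$; the goal is to show $\vv{g} = \nabla M_\mu\phi(\vx)$. Plugging the generally suboptimal choice $\vu = \vv{p}$ into the minimization defining $M_\mu\phi(\vy)$ gives $M_\mu\phi(\vy) \le \phi(\vv{p}) + \frac{1}{2\mu}\|\vv{p} - \vy\|_2^2$ for every $\vy$. Expanding $\|\vv{p} - \vy\|_2^2 = \|\vv{p} - \vx\|_2^2 + 2\langle \vv{p} - \vx, \vx - \vy\rangle + \|\vx - \vy\|_2^2$, noting that $\frac{1}{\mu}\langle \vv{p}-\vx,\vx-\vy\rangle = \langle \vv{g}, \vy - \vx\rangle$, and using Equation~\eqref{eq:envelope_prox_formula} to identify $\phi(\vv{p}) + \frac{1}{2\mu}\|\vv{p}-\vx\|_2^2 = M_\mu\phi(\vx)$, this rearranges into the quadratic majorization
\[
M_\mu\phi(\vy) \;\le\; M_\mu\phi(\vx) + \langle \vv{g}, \vy - \vx\rangle + \frac{1}{2\mu}\|\vy - \vx\|_2^2 \qquad \text{for all } \vy \in \reals^d .
\]

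The final step is to combine this majorization with convexity to pin down the subdifferential. Let $\vv{h}$ be any element of $\partial M_\mu\phi(\vx)$ (nonempty by the preliminary remark). The subgradient inequality $M_\mu\phi(\vy) \ge M_\mu\phi(\vx) + \langle \vv{h}, \vy - \vx\rangle$ together with the majorization gives $\langle \vv{h} - \vv{g}, \vy - \vx\rangle \le \frac{1}{2\mu}\|\vy - \vx\|_2^2$ for every $\vy$; taking $\vy = \vx + t(\vv{h} - \vv{g})$ and letting $t \downarrow 0$ forces $\|\vv{h} - \vv{g}\|_2^2 \le \frac{t}{2\mu}\|\vv{h} - \vv{g}\|_2^2 \to 0$, so $\vv{h} = \vv{g}$. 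Hence $\partial M_\mu\phi(\vx) = \{\vv{g}\}$ is a singleton at every $\vx$, which for a finite convex function is equivalent to differentiability with $\nabla M_\mu\phi(\vx) = \vv{g} = \frac{1}{\mu}(\vx - \prox_{\mu\phi}(\vx))$, i.e. Equation~\eqref{eq:moreau_gradient}; continuity of the gradient is then automatic, since the gradient of a convex function differentiable throughout $\reals^d$ is continuous.

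I do not anticipate a genuine obstacle: the only slightly delicate point is the pair of standard convex-analysis facts invoked at the end — a singleton subdifferential at a point is equivalent to differentiability there, and a convex function differentiable on an open set is automatically $C^1$ — which I would cite from \citet{beck_fom} rather than reprove. As an alternative, equally short route I would mention the conjugate-duality computation: $M_\mu\phi(\vx) = \frac{1}{2\mu}\|\vx\|_2^2 - g^*(\vx/\mu)$ with $g \defeq \phi + \frac{1}{2\mu}\|\cdot\|_2^2$; since $g$ is $\tfrac{1}{\mu}$-strongly convex, $g^*$ is differentiable with $\mu$-Lipschitz gradient $\nabla g^*(\vv{w}) = \prox_{\mu\phi}(\mu\vv{w})$, and differentiating this identity recovers Equation~\eqref{eq:moreau_gradient} together with (Lipschitz) continuity of $\nabla M_\mu\phi$ for free.
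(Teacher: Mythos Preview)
Your argument is correct. The quadratic majorization you derive is exactly the standard ``descent lemma'' form of the Moreau envelope, and the squeeze against an arbitrary subgradient cleanly forces the subdifferential to be the singleton $\{\vv{g}\}$; the two cited facts at the end (singleton subdifferential $\Leftrightarrow$ differentiability for finite convex functions, and automatic continuity of the gradient of a convex function differentiable on all of $\reals^d$) are indeed standard and available in \citet{beck_fom}.

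By contrast, the paper does not prove this proposition at all: its entire proof is a reference to \citet[Theorem~6.55]{beck_fom}. So your route is not just different but strictly more informative --- you supply a short self-contained argument using only tools already introduced in the paper (square completion, Equation~\eqref{eq:envelope_prox_formula}, basic subdifferential calculus), whereas the paper simply defers to the textbook. Your alternative conjugate-duality sketch is also valid and has the bonus of yielding the $\tfrac{1}{\mu}$-Lipschitz continuity of $\nabla M_\mu\phi$, which the paper neither states nor needs.
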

\begin{proof}
See \cite[Theorem~6.55]{beck_fom}
\end{proof}
Recall, that we require $h^*$ to be continuously differentiable, strictly convex, and $\dom(h^*)$ to be open. Using \eqref{eq:moreau_gradient} and the chain rule to compute the derivative of $q(s)$, and then simplifying, we obtain:
\begin{equation}\label{eq:cvx_lin_reg_qprime}
    q'(s) = \va^T \prox_{\eta r}(\vx_t - \eta s \va) - {h^*}'(s) + b
\end{equation}
For the non-compact domain case, we will require that there is a unique maximizer $s^*$ in the \emph{interior} of $\dom(h)$, which also implies that $q'(s^*) = 0$. The above is satisfied when, for example, when $h^*$ is strictly convex and $\dom(h^{*})$ is open, or when $h^*$ is a \emph{convex function of Legendre type} \cite[sect.~26]{rockafellar_convex_analysis}, which is a condition satisfied by, for example, by $h^*(s) = \frac{1}{2} s^2$ in the least-squares setup. Strict convexity of $h^*$ implies that $q'$ is strictly decreasing, and thus any point $l < s^*$ has a positive derivative, whereas any $u > s^*$ has a negative derivative. Thus, we can find an interval $[l, u]$ containing $s^*$ just by scanning to the left until we find a point whose derivative is positive, and to the right until we find a point whose derivative is negative. And that's exactly why we need the sequences $\{l_k\}_{k=1}^\infty$ and $\{u_k\}_{k=1}^\infty$ for - to tell us how to scan towards the left and right boundaries of $\dom(-q) = \dom(h^*)$. For example, if $\dom(h^*) = [1, \infty)$, we may scan towards the left end-point using the sequence $\ell_k = 1 + 2^{-k}$, and towards the right-endpoint, which is infinity, using the sequence $u_k = 1 + 2^k$.

It is important to note that computing $q'$ requires evaluating the proximal operator of the regularizer $r$. Therefore, each iteration in the search for the optimal $s^*$ \emph{depends on the dimension $d$}, and hence is typically significantly slower than the convex onto linear setting without regularization, that was discussed in Section \ref{sec:cvx_lin}. Of course, it would be possible to construct a dedicated procedure for each combination of outer function $h$ and regularizer $r$, which may occasionally be fast, but it defeats one of the main purposes of our framework - its modularity. 

The fact that the computational complexity of each iteration is linear in the dimension suggests that we could have employed an accelerated first-order method, such as \cite{beck2009fast}, directly on the primal proximal sub-problem. To see why we chose \emph{not} to pursue this direction, note that Brent's method \cite{brent2013algorithms} used by \texttt{fminbound} in our code enjoys \emph{superlinear convergence} (of degree $~1.67$) in the vicinity of the optimum. Accelerated first-order methods have a convergence rate that is linear, at best, when their objective is strongly convex. This suggests that Brent's method, in addition to fitting well into our dual decomposition approach, is also potentially more efficient due to reduced iteration count. We chose to rely on the fact that for one-dimensional problems we have specialized methods that are often faster than methods that work in arbitrary dimensions.

\paragraph{The implementation}
Combining the two cases above, here is an implementation of our optimizer.
\begin{pycode}
from scipy.optimize import fminbound
import torch

class IncRegularizedConvexOnLinear:
    def __init__(self, x, h, r):
        self._x = x
        self._h = h
        self._r = r

    def step(self, eta, a, b):
        x = self._x
        h = self._h
        r = self._r

        if torch.is_tensor(b):
            b = b.item()

        lin_coef = (torch.dot(a, x) + b).item()
        quad_coef = (eta / 2.) * a.square().sum().item()
        loss = h.eval(lin_coef) + r.eval(x).item()

        def qprime(s):
            prox = r.prox(eta, x - eta * s * a)
            return torch.dot(a, prox).item() \
                   - h.conjugate_prime(s) \
                   + b

        def q(s):
            return r.envelope(eta, x - eta * s * a) \
                   + lin_coef * s \
                   - quad_coef * (s ** 2) \
                   - h.conjugate(s)

        if h.conjugate_has_compact_domain():
            l, u = h.domain()
        else:
            # scan left until a positive derivative is found
            l = next(s for s in h.lower_bound_sequence() if qprime(s) > 0)

            # scan right until a negative derivative is found
            u = next(s for s in h.upper_bound_sequence() if qprime(s) < 0)

        min_result = fminbound(lambda s: -q(s), l, u)
        s_prime = min_result.x
        x.set_(r.prox(eta, x - eta * s_prime * a))
        
        return loss
\end{pycode}

Let's look at a usage example. Assuming that $h(z) = \frac{1}{2} z^2$ is represented using the \texttt{HalfSquared} class, and $r(\vx) = \| \vx \|_1$ is represented using the \texttt{L1Reg} class, we can perform an epoch of training an L1 regularized least-squares model (Lasso) with regularization parameter $0.1$ using the following code:
\begin{pycode}
x = torch.zeros(d)

optimizer = IncRegularizedConvexOnLinear(x, HalfSquared(), L1Reg(0.1))
epoch_loss = 0.
for t, (a, b) in enumerate(get_training_data()):
    eta = get_step_size(t)
    epoch_loss += optimizer.step(eta, a, b)
    
print('Model parameters = ', x)
print('Average epoch loss = ', epoch_loss / t)
\end{pycode}

We believe that Section \ref{sec:cvx_lin} demonstrated how to modularity is achieved, and therefore the implementation of the \texttt{HalfSquared}, \texttt{Logistic}, and \texttt{Hinge} classes for the functions $h$ representing various losses, and of the \texttt{L1Reg} and \texttt{L2Reg}, and \texttt{L2NormReg} classes for regularization with $\| \cdot \|_1$, $\| \cdot \|_2^2$, and $\| \cdot \|_2$ are in Appendix \ref{app:cvx_lin_reg_code}.

\subsection{Empirical evaluation}
Before evaluating the efficiency empirically, a word about the computational complexity is in place. Computing the proximal step requires solving a one-dimensional dual problem using an optimization algorithm over the real line. The algorithms employed by the \texttt{fminbound} function typically achieve $\varepsilon$ accuracy in terms of distance to a minimize using $O(\ln(\frac{1}{\varepsilon}))$ function evaluations. By default, SciPy's implementation achieves $\varepsilon=10^{-10}$, meaning that $\ln(\varepsilon^{-1}) \approx 23$. However, this time, evaluating $q(s)$ requires computing the proximal operator of our regularizer, which entails a computational complexity of $O(n)$ for $\vv{x} \in \mathbb{R}^n$. This is in contrast to the convex-onto-linear case we saw in Section \ref{sec:cvx_lin}, where evaluating $q(s)$ was independent of the problem's dimension. The computational cost per iteration is dominated by few dozen evaluations of $\prox_{\eta r}$, and thus we expect it to be a few dozen times slower than a regular proximal-gradient step (without mini-batches).

As was the case with the convex-onto-linear setting, We applied our speed evaluation on the Logistic regression and Least Squares problems, both with L1 and L2 regularization. We measured the total time to make one epoch over data-sets of various dimensions and of various lengths, and plotted a regression line, so that we can indeed convince ourselves than our implementation is slower than a regular incremental gradient method by a small-enough constant factor. The results are plotted in Figure \ref{fig:cvx_linreg_empirical_speed}. As a summary, without mini-batching, the proximal-point method is roughly 5-10 times slower than its proximal-gradient counterpart. However, mini-batches may make an incremental gradient method computational faster by a factor of 20-150, depending on the problem. As we pointed out, this slowdown is mainly caused by the significantly costlier solution of the dual problem. Even though it appears to be a one-dimensional optimization problem, the computational complexity of evaluating the derivative of the one-dimensional objective does depend on the problem dimension. 

\begin{figure}
    \centering
    \includegraphics[width=\textwidth]{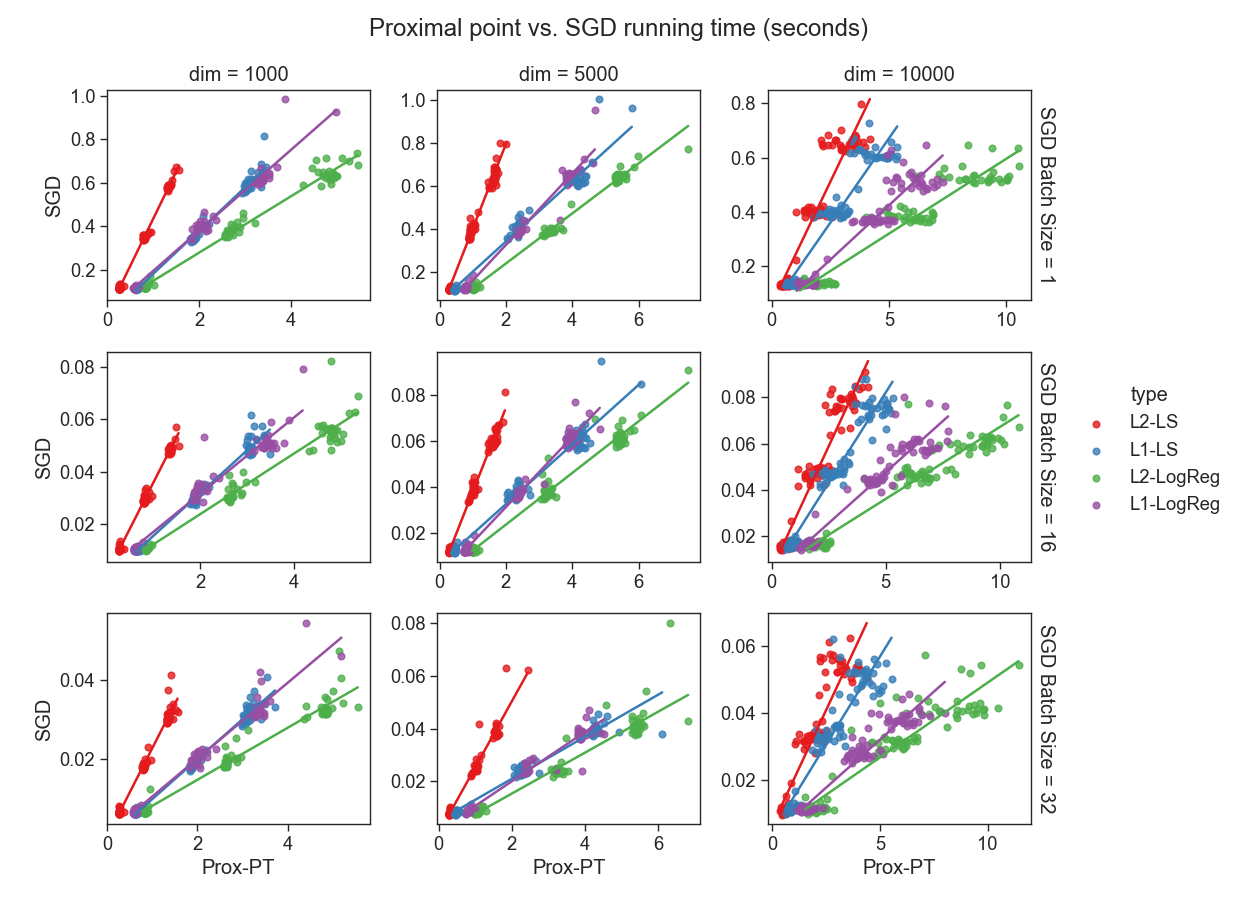}
    \caption{Execution speed evaluation of incremental proximal point. Each point is a timing of a pair of experiments on the same problem, where the $x$ coordinate is the execution time of one SGD epoch, whereas the $y$ coordinate is the execution time of one proximal point epoch. The columns are various problem dimensions, and the rows are various mini-batch sizes for the incremental proximal gradient method. The columns are various problem dimensions, and the rows are various mini-batch sizes for the incremental proximal gradient method. We can see by the first row, for example, that without mini-batching the proximal-point method is 5-10 times slower than its proximal-gradient variant. In the last row, for example, we can see that for batch sizes of 32 samples the incremental proximal point method is roughly 20-150 times slower, depending on the problem and the dimension.}
    \label{fig:cvx_linreg_empirical_speed}
\end{figure}

Regarding correctness, we again adopt a similar strategy to the convex-onto-linear setting, and solve various problem types with a variety of step-sizes, and see the achieved training loss of one epoch over a data-set of 10,000 samples in $\reals^{100}$. The results are plotted in Figure \ref{fig:cvxlinreg_stability_reproduction}

\begin{figure}
    \centering
    \includegraphics[width=\textwidth]{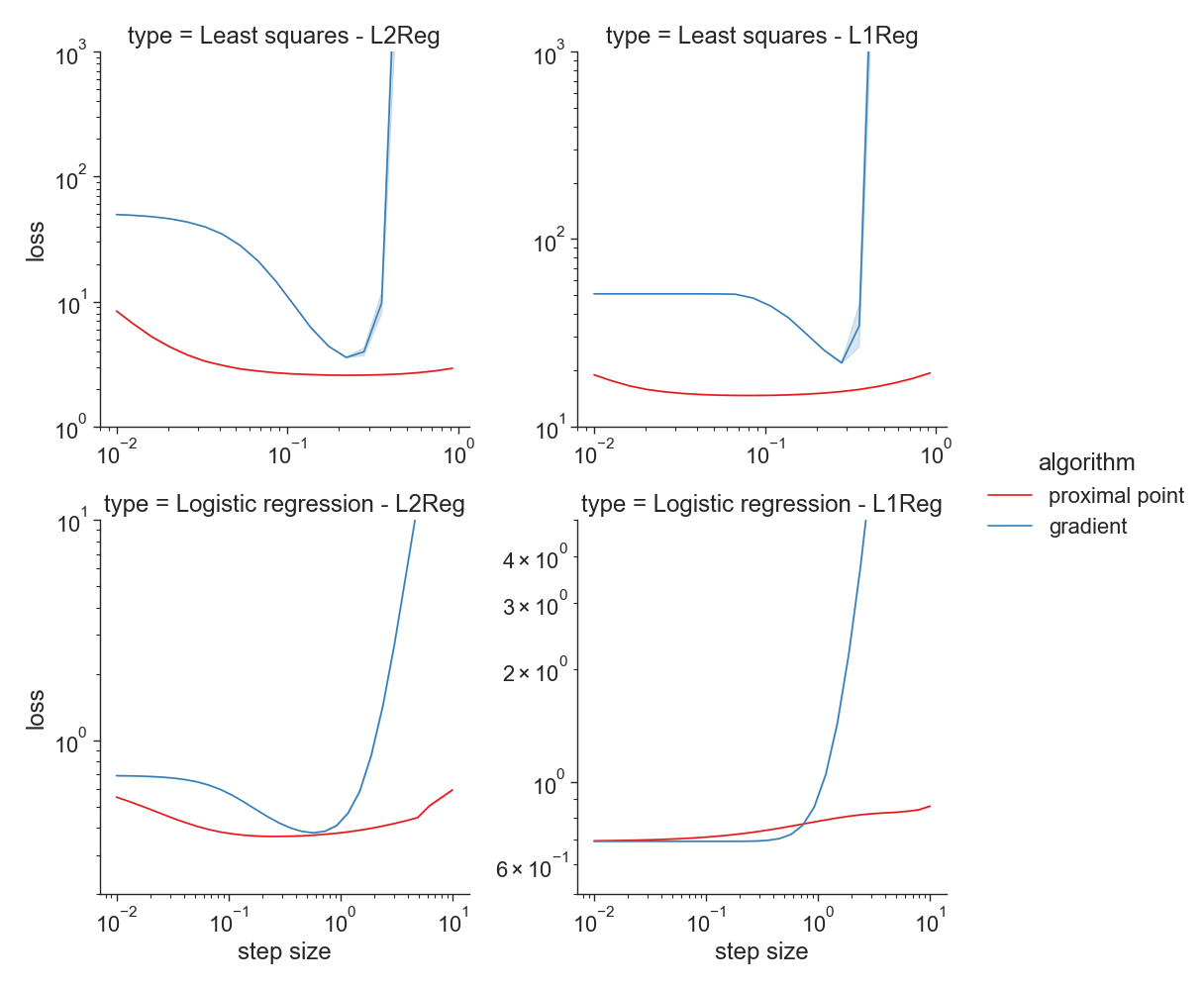}
    \caption{Reproduction of the stability results of \cite{asi_duchi_better_models}. Indeed, for both logistic regression and least squares problems, with both L1 and L2 regularization, the training loss of an incremental proximal-point method is significantly more stable w.r.t the step-size choice than the incremental proximal-gradient method, reinforcing the claim that with an incremental proximal-point method we can just "make an few educated guesses" instead of performing an extensive hyper-parameter search for the best step-size.}
    \label{fig:cvxlinreg_stability_reproduction}
\end{figure}

\subsection{Summary}
Throughout this paper, our aim is to design implementations of proximal point algorithms with two objectives in mind: (a) make the algorithms extensible by decoupling concerns into separate modules, and (b) use existing text-book concepts and software libraries to help build these modules. In the case of convex onto linear composition, we were able to decouple the function $h$ and the regularizer $r$ into separate modules which provide oracles to a central algorithm. The oracles are implemented using three textbook concepts: the convex conjugate of $h$, the Moreau envelope of $r$, and the proximal operator of $r$. Using textbook concepts allows easier extension of our library with additional functions $h$ and $r$, since we stand on the shoulders of the giants who already developed tables and calculus rules of convex conjugates and proximal operators for a variety of useful functions.

An alternative approach to using duality could be using a generic solver package, such as \texttt{CVXPY} \cite{diamond2016cvxpy}, to directly solve the proximal sub-problem. The numerical experiments of the next section show that this approach is significantly slower in practice for the setting at hand, where we aim to learn from one sample at a time, rather than from a mini-batch of samples. Indeed, our solver is roughly 5-10 times slower than SGD, whereas the CVXPY approach for small batch sizes appears to be at least 30 times slower (see Figure 5b).

Another alternatives for solving the proximal sub-problem could be primal-dual hybrid gradient methods, such as the celebrated Chambolle-Pock method \cite{chambollepock}, and their accelerated variants \cite{Chambolle2016}. However, this would result in a radically different decomposition into software components than our duality-based approach. Therefore, these approaches are out of the scope of this paper, but are worth exploring in a future work.

\section{Mini-batch convex onto linear composition}\label{sec:cvx_lin_mb}
The mini-batch convex onto linear compositions are useful when extending the scenarios described in Section \ref{sec:cvx_lin_app} to handle mini-batches of data, instead of individual data points. For example, suppose we're aiming to minimize
\[
F(\vx) = \frac{1}{N} \sum_{i=1}^N f_i(\vx),
\]
and that the size of the data-set $N$ is huge. A standard practice with incremental optimization methods, such as SGD, is to use mini-batches of items: a mini-batch $B \subseteq \{1, \dots, N \}$ is chosen in each iteration, and the model's parameters  are updated using the gradient of the function 
\[
F_B(\vx) = \frac{1}{|B|} \sum_{j \in B} f_i(\vx).
\]
The idea above is equivalent to using a linear approximation of $F_B$ in each iteration. For example, the steps of mini-batch SGD are:
\[
\vx_{t+1} = \vx_t - \eta \nabla F_B(\vx_t) = \argmin_{\vx} \left\{ F_B(\vx_t) + \langle \nabla F_B(\vx_t), \vx - \vx_t \rangle + \frac{1}{2\eta} \| \vx - \vx_t \|_2^2 \right\}.
\]
If our functions $f_i$ are of the form $h(\va^T_i \vx + b_i)$ with $h$ being a convex extended real-valued function, or if we chose to approximate $f_i$ using such functions, then by replacing the linear approximation $\vx_{t+1}$ is computed by solving
\[
\vx_{t+1} = \argmin_x \left\{ \frac{1}{|B|} \sum_{i \in B} h(\va_i^T \vx + b_i) + \frac{1}{2\eta} \|\vx - \vx_t\|_2^2 \right\}.
\]
Assuming w.l.o.g that $B = \{1, \dots, m\}$, the above is exactly of the form in Equation \eqref{eq:cvx_lin_minibatch}. As was the case with our previous derivations, convex duality plays a central role in computing the above proximal operator. We note that an alternative formulation for mini-batching could be using the proximal average \cite{prox_avg1,prox_avg2}, rather than the arithmetic average of functions. In such a formulation, the proximal operator of the mini-batch reduces to the average of the proximal operators of the individual functions, and can be computed using the tools developed in Section \ref{sec:cvx_lin}. However, we also note that the interplay of such a formulation with the stochastic optimization setting is not clear - what is the bias, if any, of the proximal average as an estimator for the mean loss? 

\subsection{A dual problem}
Embedding the vectors $\va_1^T, \dots, \va_m^T$ into the rows of the matrix $\vA$, and the scalars $b_1, \dots, b_m$ into the vector $\vb$, and introducing the auxiliary variable $\vz = \vA \vx + \vb$, our optimization problem can be equivalently written as
\[
\min_{\vx, \vz} \quad \frac{1}{m} \sum_{i=1}^m h(z_i) + \frac{1}{2\eta} \| \vx - \vx_t \|_2^2 \quad \text{s.t.} \quad \vz = \vA \vx + \vb.
\]
The corresponding dual problem aims to maximize
\begin{align*}
q(\vs) 
 &= \inf_{\vx, \vz} \left\{ \frac{1}{m} \sum_{i=1}^m h(z_i) + \frac{1}{2\eta} \| \vx - \vx_t \|_2^2 + \vs^T (\vA \vx + \vb - \vz)  \right\} \\
 &= \underbrace{ \inf_\vx \left\{ (\vA^T \vs)^T \vx + \frac{1}{2\eta} \| \vx - \vx_t \|_2^2 \right\} }_{(*)}
  + \sum_{z=1}^m \underbrace{ \inf_{z_i} \left\{ \frac{1}{m} h(z_i) - z_i s_i \right \} }_{(**)} + \vs^T \vb
\end{align*}
The term denoted by (*) above, despite its "hairy" appearance, is the minimum of a simple convex quadratic function, which is computed by comparing the gradient of the term inside the $\inf$ with zero, which leads to
\begin{equation}\label{eq:cvx_lin_minibatch_recover}
\vx = \vx_t - \eta \vA^T \vs,
\end{equation}
and the corresponding minimum is
\[
(*) = -\frac{\eta}{2} \|  \vA^T \vs\|_2^2 + (\vA \vx_t)^T \vs.
\]
The terms denoted by (**) can be equivalently written as
\[
\inf_{z_i} \left\{ \frac{1}{m} h(z_i) - z_i s_i \right \} = -\frac{1}{m} \sup_{z_i} \left\{ (m s_i) z_i - h(z_i) \right\} = -\frac{1}{m} h^*(m s_i),
\]
where, again, $h^*$ denotes the convex conjugate of $h$. To summarize, the dual aims to solve:
\[
\max_{\vs} \quad q(\vs) = -\frac{\eta}{2} \|  \vA^T \vs\|_2^2 + (\vA \vx_t + \vb)^T \vs - \frac{1}{m} \sum_{i=1}^m h^*(m s_i).
\]
\subsection{Computing the proximal operator}
Recall, that the strong duality theorem (Theorem \ref{thm:strong_duality}) ensures that $q$ has a maximizer, and that Equation \eqref{eq:cvx_lin_minibatch_recover} recovers the primal minimizer, which is what we aim to compute. Thus, computing our proximal operator amounts to the following three steps:
\begin{enumerate}
    \item Form $q(\vs) = -\frac{\eta}{2} \|  \vA^T \vs\|_2^2 + (\vA \vx_t + \vb)^T \vs - \frac{1}{m} \sum_{i=1}^m h^*(m s_i)$
    \item Solve the dual problem: find a maximizer $\vs^*$ of $q(\vs)$
    \item Recover the desired solution: $\vx_t - \eta \vA^T \vs^*$
\end{enumerate}
Note, that in the mini-batch setting, the dual problem is no longer \emph{one dimensional}, but when the size of the mini-batch $m$ is small, it is still a \emph{low dimensional} problem, whose solution should be pretty quick. As was the case in Section \ref{sec:cvx_lin}, our representation of the function $h$ amounts to a class which provides two oracles: (a) evaluate the function $h$, and (b) maximize functions of the form $\frac{1}{2} \| \vv{P} \vs\|_2^2 + \vv{c}^T \vs - \frac{1}{m} \sum_{i=1}^m h^*(m s_i)$. Let's first implement a generic optimizer, and then dive into the implementation of concrete functions $h$.
\begin{pycode}
import torch

class MiniBatchConvLinOptimizer:
    def __init__(self, x, phi):
        self._x = x
        self._phi = phi

    def step(self, step_size, A_batch, b_batch):
        # helper variables
        x = self._x
        phi = self._phi

        # compute dual problem coefficients
        P = math.sqrt(step_size) * A_batch.t()
        c = torch.addmv(b_batch, A_batch, x)

        # solve dual problem
        s_star = phi.solve_dual(P, c)
        
        # perform step
        step_dir = torch.mm(A_batch.t(), s_star)
        x.sub_(step_size * step_dir.reshape(x.shape))

        # return the mini-batch losses w.r.t the params before making the step
        return phi.eval(c)
\end{pycode}
We will shortly implement $h(z) = \half z^2$ in the \texttt{HalfSquared} class, $h(z) = \ln(1+\exp(z))$ in the \texttt{Logistic} class, and $h(z)=\max(0, z)$ in the \texttt{Hinge} class. With the above we can now, for example, solve a linear least-squares problem using mini-batches of training samples:
\begin{pycode}
x = torch.zeros(dim)
optimizer = MiniBatchConvLinOptimizer(x, HalfSquared())
dataset = get_my_dataset()
for t, (A_batch, b_batch) in enumerate(DataLoader(dataset, batch_size=32)):
    step_size = get_step_size(t)
    optimizer.step(step_size, A_batch, b_batch)

print('The model parameter vector is', x)
\end{pycode}
The concrete implementation of various functions $h$ can be found in Appendix \ref{app:cvx_lin_mb_code}. We note that the dual problem for $h(z) = \half z^2$ can be computed analytically, but for the hinge and logistic functions it cannot, and therefore we employ \texttt{CVXPY} \cite{diamond2016cvxpy, agrawal2018rewriting}, which in fact is an interface to a variety of lower-level convex optimization solvers. Since the dual problem's dimension is proportional to the mini-batch size, which is typically quite small, solving it should still be moderately fast. This is, again, the case when a combined C/Python implementation could be more efficient than pure Python code, since we could use C to implement a dedicated efficient convex solver for the dual problem of each loss.

\subsection{Empirical evaluation}
In contrast to the previous problem families, this time our proximal-point solver can handle mini-batches of data. Thus, when comparing execution speed, we compare the same mini-batch size for both our method and the incremental gradient method. Figure \ref{fig:minibatch_cvxlin_empirical_speed} contains the plots for various mini-batch sizes and two problem types - linear least-squares and logistic regression. We see that for least-squares problems, the mini-batched proximal point method is on par with the incremental gradient method. However, logistic regression problems employ the \texttt{Logistic} class which incurs the overhead of the \texttt{CVXPY} framework, and is approximately 15 times slower for mini-batches of 32 samples, and approximately 100 times slower for a mini-batch of 8 samples. The overhead of \texttt{CVXPY} is apparent, however, it isn't our aim to design and develop dedicated solvers for high-dimensional convex optimization problems. However, we encourage our readers to do so, or find a faster third-party solver, if they desire a production-grade optimizer for their model.

\begin{figure}
    \centering
    \begin{subfigure}{.9\textwidth}
        \centering
        \includegraphics[width=.8\textwidth]{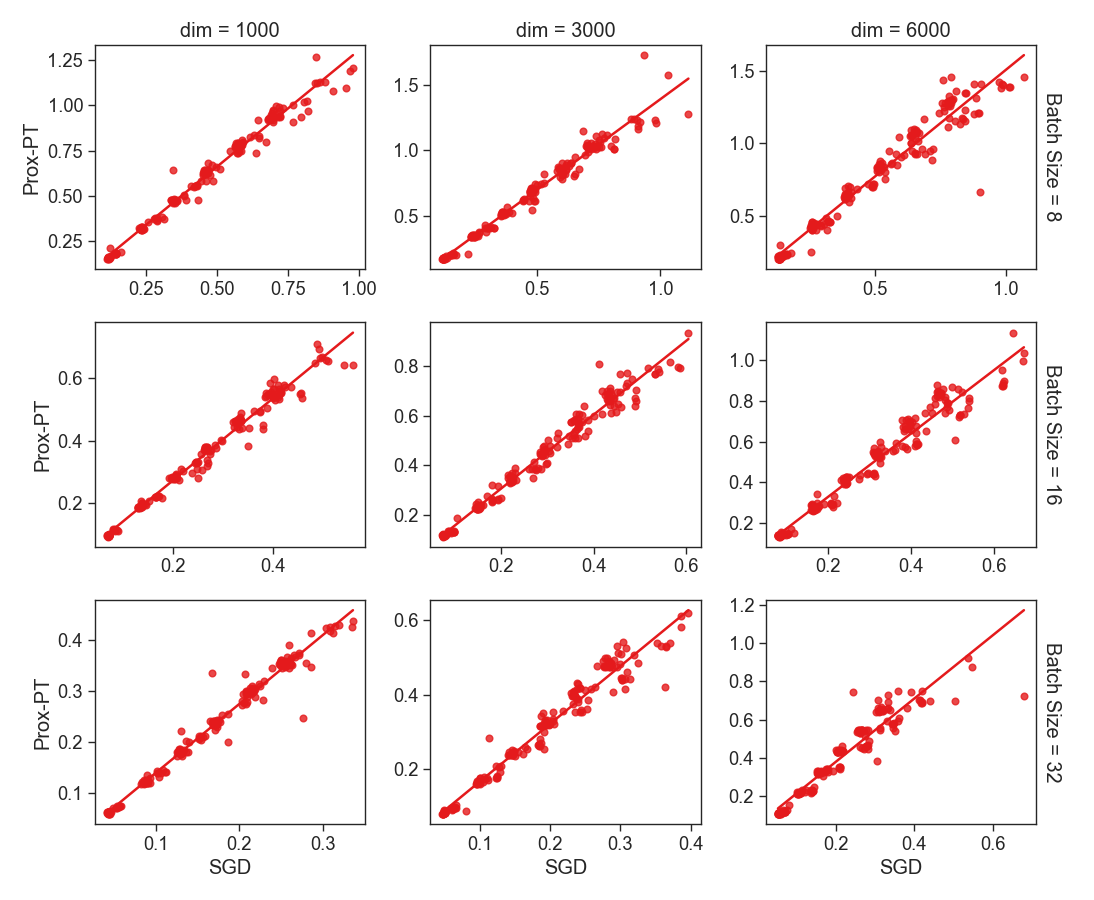}
        \caption{Least squares proximal point vs. SGD running time (seconds)}
        \label{fig:minibatch_cvxlin_empirical_speed_ls}
    \end{subfigure}
    
    \hfill
    
    \begin{subfigure}{.9\textwidth}
        \centering
        \includegraphics[width=.8\textwidth]{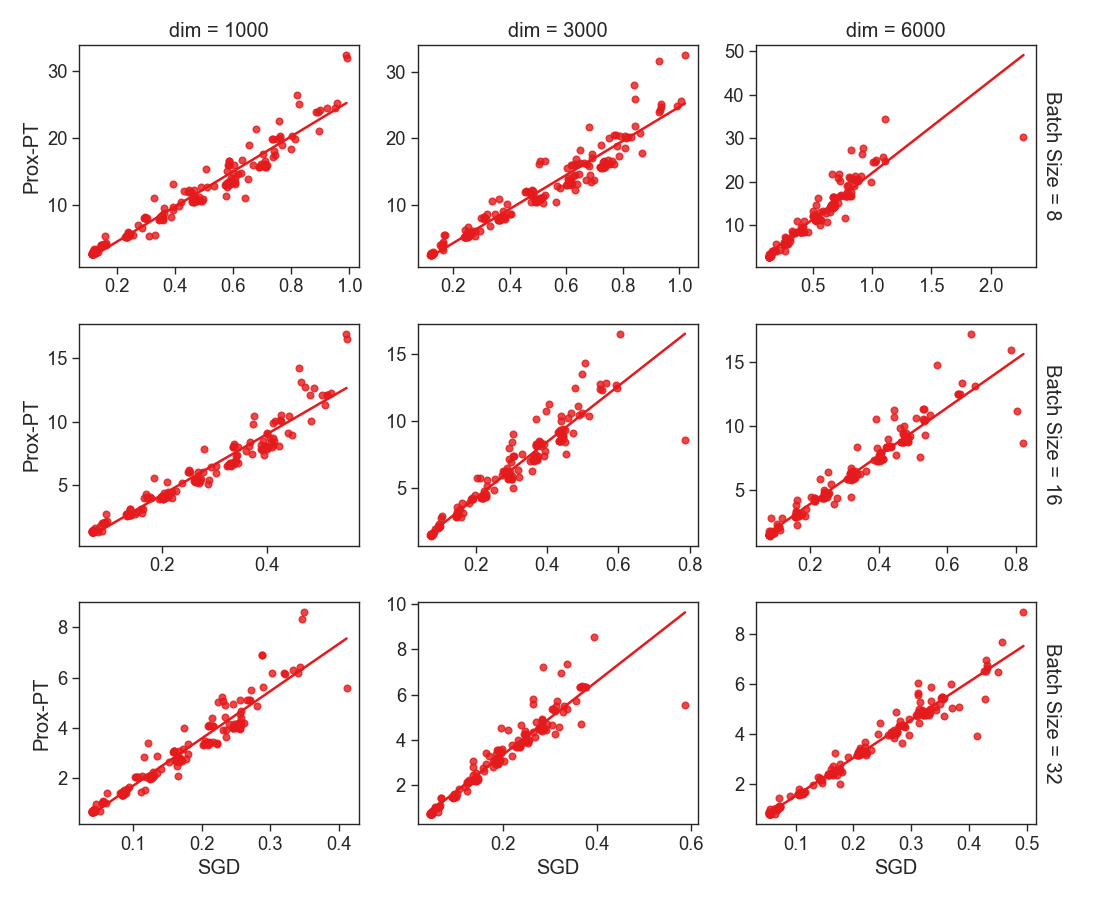}
        \caption{Logistic regression proximal point vs. SGD running time (seconds)}
        \label{fig:minibatch_cvxlin_empirical_speed_logreg}
    \end{subfigure}
    
    \caption{Execution speed evaluation of mini-batch incremental proximal point. Each point is a timing of a pair of experiments on the same problem, where the $x$ coordinate is the execution time of one SGD epoch, whereas the $y$ coordinate is the execution time of one mini-batch proximal point epoch. Points differ in running times due to generated data-set size. The columns are various problem dimensions, and the rows are various mini-batch sizes for the incremental gradient method. The corresponding line is a least-squares regression line, whose slope allows to appreciate the ratio between the SGD and proximal point running times. For least-squares problems, the mini-batched proximal point method is on par with the incremental gradient method. However, logistic regression problems employ the \texttt{Logistic} class which incurs the overhead of the \texttt{CVXPY} framework and of a generic conic solver, and is approximately 15 times slower for mini-batches of 32 samples, and approximately 100 times slower for a mini-batch of 8 samples, where CVXPY's overhead is more significant.}
    \label{fig:minibatch_cvxlin_empirical_speed}
\end{figure}

And again, to see that we indeed harvest the fruits of a proximal-point algorithm, we conduct a stability experiment, where we solve logistic regression and least-squares problems, and expect to see the resulting algorithm being much more stable, with respect to the step-size choice, than an incremental gradient method. The results are plotted in Figure \ref{fig:minibatch_cvxlin_empirical_stability}, where the stability is apparent - we indeed obtain a low training loss value for a large range of step-sizes, in contrast to the incremental gradient method, where we need to "pinpoint" the correct step-size to obtain good performance.

\begin{figure}
    \centering
    \includegraphics[width=\textwidth]{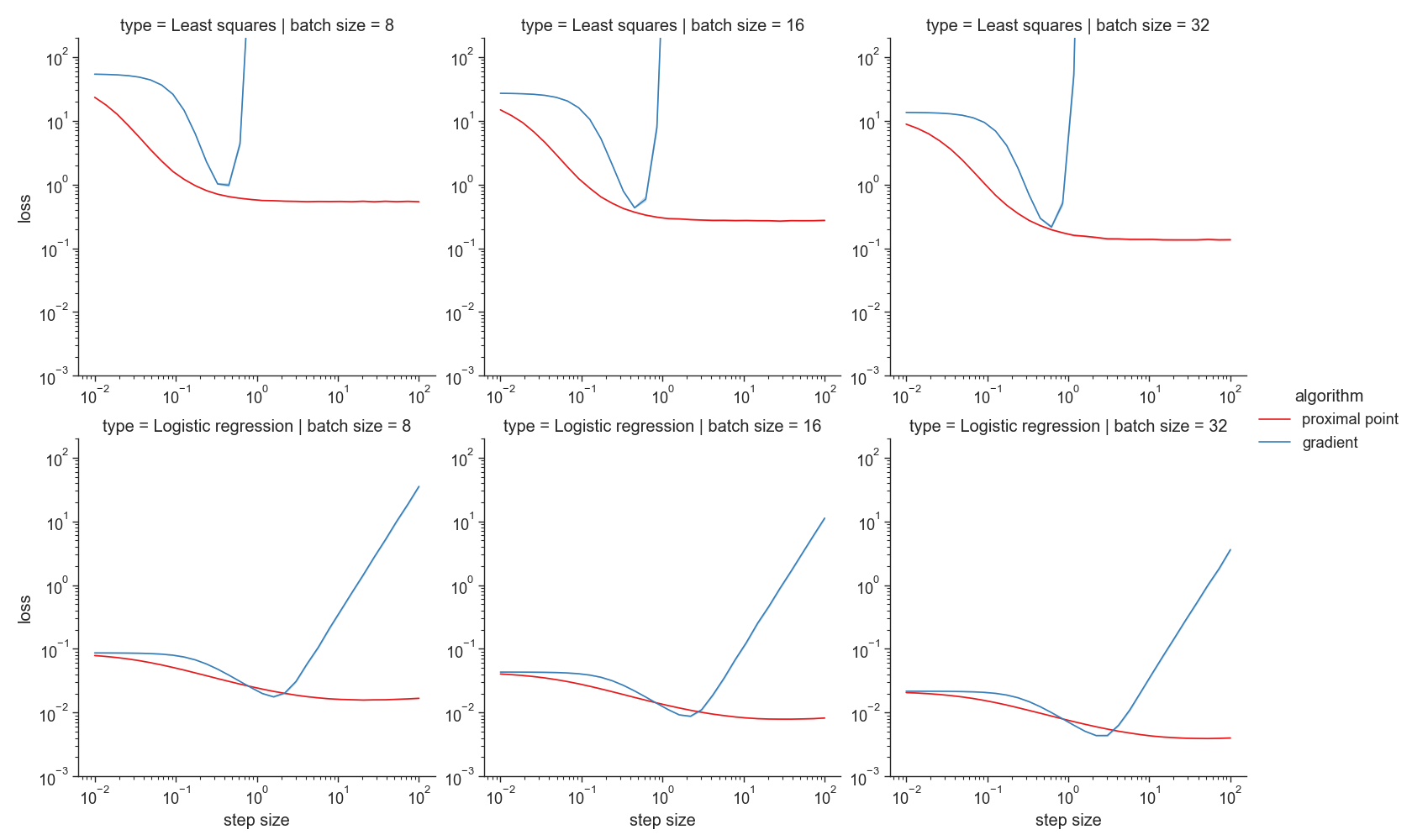}
    \caption{Results of solving logistic regression and linear least-squares problems using an incremental gradient method and our proximal-point with mini-batches implementation.}
    \label{fig:minibatch_cvxlin_empirical_stability}
\end{figure}

\subsection{Summary}
Duality played a central role here as well, but instead of reducing the proximal operator problem to a one-dimensional problem, we reduced the proximal operator to a, hopefully, low dimensional problem whose dimension is the size of the mini-batch. In practice, mini-batches typically have less than $128$ samples, and thus our dual problems are of very low dimensions and can be solved extremely quickly. There is, of course, an additional overhead incurred by using a commodity optimization package such as CVXPY, and a dedicated solver for each dual problem could be substantially faster. However, writing robust and efficient convex optimziation solvers is out of the scope of this paper.

\vskip 0.2in
\bibliographystyle{spmpsci}
\bibliography{proxpt}

\section{Statements and Declarations}
The authors declare that no funds, grants, or other support were received during the preparation of this manuscript. \\
The authors have no relevant financial or non-financial interests to disclose.

\clearpage
\appendix

\section{Code for regularized convex onto linear compositions}\label{app:cvx_lin_reg_code}

\subsection{The \texttt{HalfSquared} class}
For $h(z) = \frac{1}{2} z^2$, looking at the Table \ref{tab:conjugates}, we see that $h^*(s) = \frac{1}{2} s^2$ with $\dom(h^*) = \reals$, i.e. its conjugate is itself. Since the domain is non-compact and open, and $h^*$ is strictly convex, we perfectly fit the non-compact case. We will use the sequence $-1, -2, -2^2, -2^3, \dots$ for the lower bounds, and $1, 2, 2^2, 2^3, \dots$ for upper bounds. The implementation is below:
\begin{pycode}
from itertools import count

class HalfSquared:
    def eval(self, z):
        return (z ** 2) / 2
    
    def conjugate_has_compact_domain(self):
        return False
        
    def lower_bound_sequence(self):
        return (-(2 ** j) for j in count())
    
    def upper_bound_sequence(self):
        return ((2 ** j) for j in count())
        
    def conjugate(self, s):
        return (s ** 2) / 2
        
    def conjugate_prime(self, s):
        return s
\end{pycode}

\subsection{The \texttt{Logistic} class}
For $h(z) = \ln(1 + \exp(z))$, looking at Table \ref{tab:conjugates}, we see that $h^*(s) = s \ln(s) + (1 - s) \ln(1 - s)$ with the convention that $0 \ln(0) = 0$, and that the domain of $h^*$ is the compact interval $[0, 1]$.
Based on the above, we implement the \texttt{Logistic} class below:
\begin{pycode}
import math

class Logistic:
    def eval(self, z):
        return math.log1p(math.exp(z))

    def conjugate_has_compact_domain(self):
        return True
        
    def domain(self):
        return (0, 1)
        
    def conjugate(self, s):
        def entr(u):
            if u == 0:
                return 0
            else:
                return u * math.log(u)
        
        return entr(s) + entr(1 - s)
\end{pycode}
Note that since we're not going to be searching for an initial interval, we don't need to implement the methods returning upper bound and lower bound sequences.

\subsection{The \texttt{Hinge} class}
For $h(z) = \max(0, z)$, looking at Table \ref{tab:conjugates}, we see that $h^*(s)$ is just the indicator of the interval $[0, 1]$, and we again fall into the compact domain case. Below is the implementation:
\begin{pycode}
import math

class Hinge:
    def eval(self, z):
        return max(0, z)

    def conjugate_has_compact_domain(self):
        return True
        
    def domain(self):
        return (0, 1)
        
    def conjugate(self, s):
        if s < 0 or s > 1:
            return math.inf
        else:
            return 0
\end{pycode}

\subsection{The \texttt{L2Reg} class}
First, note that for all regularizers we need to be able to compute their Moreau envelope and their proximal operator. Hence, we first define a common base class:
\begin{pycode}
from abc import ABC, abstractmethod

class Regularizer(ABC):
    @abstractmethod
    def prox(self, eta, x):
        pass

    @abstractmethod
    def eval(self, x):
        pass

    def envelope(self, eta, x):
        prox = self.prox(eta, x)
        result = self.eval(prox) + 0.5 * (prox - x).square().sum() / eta
        return result.item()
\end{pycode}
Now we can use it to implement our \texttt{L2Reg} class, which represents $r(\vx) = \frac{\mu}{2} \|\vx\|_2^2$, using the proximal operator in Table \ref{tab:prox_operators}.
\begin{pycode}
class L2Reg(Regularizer):
    def __init__(self, mu):
        self._mu = mu
        
    def prox(self, eta, x):
        return x / (1 + self._mu * eta)
        
    def eval(self, x):
        return self._mu * x.square().sum() / 2.
\end{pycode}

\subsection{The \texttt{L1Reg} class}
Using the \texttt{Regularizer} base class above, and on Table \ref{tab:prox_operators}, we can also implement the \texttt{L1Reg} class for representing $r(\vx) = \mu \|\vx\|_1$.
\begin{pycode}
from torch.nn.functional import softshrink

class L1Reg(Regularizer):
    def __init__(self, mu):
        self._mu = mu
        
    def prox(self, eta, x):
        softshrink(x, eta * self._mu)
        
    def eval(self, x):
        return self._mu * x.abs().sum()
\end{pycode}

\subsection{The \texttt{L2NormReg} class}
The following class represents the $r(\vx) = \mu \| \vx\|_2$ class.
\begin{pycode}
from torch.linalg import norm

class L2NormReg(Regularizer):
    def __init__(self, mu):
        self._mu = mu
        
    def prox(self, eta, x):
        nrm = norm(x)
        eta = eta * self._mu
        return (1 - eta / max(eta, nrm)) * x
        
    def eval(self, x):
        return self._mu * norm(x)
\end{pycode}

\section{Code for mini-batch of convex onto linear compositions}\label{app:cvx_lin_mb_code}

\subsection{The \texttt{HalfSquared} class}
For $h(z) = \frac{1}{2} z^2$, we have $h^*(s) = \frac{1}{2} s^2$. Hence, our dual problem is of the form
\begin{align*}
q(\vs) 
 &= - \half \| \vv{P} \vs\|_2^2 + \vv{c}^T \vs - \frac{m}{2} \| \vs \|_2^2 \\
 &= -\half \vs^T \left( \vv{P}^T \vv{P} + m \vv{I} \right) \vs + \vv{c}^T \vs,
\end{align*}
where $\vv{I}$ is the identity matrix of the appropriate size. It's a simple strictly concave quadratic function, which can be minimized by equating its gradient with zero:
\[
\nabla q(\vs) = -(\vv{P}^T \vv{P} + m \vv{I}) \vs  + \vv{c} = 0.
\]
Re-arranging the above equation leads to the maximizer
\[
\vs = (\vv{P}^T \vv{P} + m \vv{I})^{-1} \vv{c}.
\]
It's also easy to see that $\vv{P}^T \vv{P} + m \vv{I}$ is a symmetric positive-definite matrix, and thus $\vs$ can be obtained using the well-known Cholesky decomposition. Fortunately, \texttt{PyTorch} has all the necessary machinery to do exactly that.
\begin{pycode}
class HalfSquared:
    def solve_dual(self, P, c):
        m = P.shape[1]  # number of columns = batch size

        # construct lhs matrix P* P + m I
        lhs_mat = torch.mm(P.t(), P)
        lhs_mat.diagonal().add_(m)

        # solve positive-definite linear system using Cholesky factorization
        lhs_factor = torch.cholesky(lhs_mat)
        rhs_col = c.unsqueeze(1)  # make rhs a column vector, so that cholesky_solve works
        return torch.cholesky_solve(rhs_col, lhs_factor)

    def eval(self, lin):
        return 0.5 * (lin ** 2)
\end{pycode}

\subsection{The \texttt{Logistic} class}
In direct contrast to the case of the half-squared function, for $h(z) = \ln(1+\exp(z))$ with $h^*(s) = s \ln(s) + (1 - s) \ln(1 - s)$ we don't have a formula for computing a maximizer $s^*$. However, convex optimization is a mature technology, and a variety of extremely fast and efficient software packages exists to do exactly that - minimize convex functions, or equivalently, maximize concave functions. In this paper we'll use one such package, \texttt{CVXPY} \cite{diamond2016cvxpy, agrawal2018rewriting}, which in fact is an interface to a variety of lower-level convex optimization solvers. 
\begin{pycode}
import torch
import cvxpy as cp

class Logistic:
    def solve_dual(self, P, c):
        # extract information and convert tensors to numpy. CVXPY
        # works with numpy arrays
        dtype = P.dtype
        m = P.shape[1]
        P = P.data.numpy()
        c = c.data.numpy()

        # define the dual optimization problem using CVXPY
        s = cp.Variable(m)
        objective = 0.5 * cp.sum_squares(P @ s) - \
            cp.sum(cp.multiply(c, s)) - \
            (cp.sum(cp.entr(m * s)) + cp.sum(cp.entr(1 - m * s))) / m

        prob = cp.Problem(cp.Minimize(objective))

        # solve the problem, and extract the optimal solution
        prob.solve()
        
        # recover optimal solution, and ensure it's cast to the same type as 
        # the input data.
        return torch.tensor(s.value).to(dtype=dtype).unsqueeze(1)

    def eval(self, lin):
        return torch.log1p(torch.exp(lin))
\end{pycode}

\subsection{The \texttt{Hinge} class}
As was the case with the \texttt{Logistic} class, there is no closed-form solution for solving the dual. The conjugate of $h(z) = \max(0, z)$ is the indicator of the interval $[0, 1]$, and thus the dual problem aims to solve
\[
\max_\vs \quad \half \| \vv{P} \vs\|_2^2 + \vv{c}^T \vs  \quad \text{s.t.} \quad 0 \leq s_i \leq \frac{1}{m}
\]
The corresponding Python code using CVXPY is below.
\begin{pycode}
import torch
import torch.nn.functional
import cvxpy as cp

class Hinge:
    def solve_dual(self, P, c):
        # extract information and convert tensors to numpy. CVXPY
        # works with numpy arrays
        dtype = P.dtype
        m = P.shape[1]
        P = P.data.numpy()
        c = c.data.numpy()

        # define the dual optimization problem using CVXPY
        s = cp.Variable(m)
        objective = 0.5 * cp.sum_squares(P @ s) - cp.sum(cp.multiply(c , s))

        constraints = [s >= 0, s <= 1. / m]
        prob = cp.Problem(cp.Minimize(objective), constraints)

        # solve the problem, and extract the optimal solution
        prob.solve()
        return torch.tensor(s.value).to(dtype=dtype).unsqueeze(1)

    def eval(self, lin):
        return torch.nn.functional.relu(lin)

\end{pycode}

\end{document}